\setlist[description]{leftmargin=0pt,labelindent=0pt}
\newcommand{\trace}{\text{tr}}
\newcommand{\mcl}{\mathcal{L}_G}
\newtheorem{theorem}{\bf{Theorem}}
\newtheorem{remark}[theorem]{\it{Remark}}
\newtheorem{problem}{\bf{Problem}}
\newcolumntype{Y}{>{\centering\arraybackslash}X}
\newcolumntype{L}{>{\centering\arraybackslash\hsize=.5\hsize}X}
\newcolumntype{E}{>{\centering\arraybackslash\hsize=.3\hsize}X}
\newcolumntype{H}{>{\centering\arraybackslash}m{6cm}}
\newcolumntype{K}{>{\centering\arraybackslash}m{2cm}}
\newcolumntype{J}{>{\centering\arraybackslash}m{2cm}}
\newcolumntype{P}[1]{>{\centering\arraybackslash}p{#1}}
\date{Apr 5, 2023}
\pgfplotsset{width=10cm,compat=1.9}
\begin{document}

\setcopyright{acmcopyright}
\acmJournal{TKDD}
\acmYear{2023} \acmVolume{1} \acmNumber{1} \acmArticle{1} \acmMonth{1} \acmPrice{00.00}\acmDOI{00.0000/0000000}
%


\title{Computing Graph Descriptors on Edge Streams}

\author{Zohair Raza Hassan}
\email{zh5337@rit.edu}
\affiliation{%
	\institution{Rochester Institute of Technology}
	\city{Rochester}
	\country{USA}
}

\author{Sarwan Ali}
\email{sali85@student.gsu.edu}
\affiliation{%
	\institution{Georgia State University}
	\city{Atlanta}
	\country{USA}
}

\author{Imdadullah Khan}
\email{imdad.khan@lums.edu.pk}
\affiliation{%
	\institution{Lahore University of Management Sciences}
	\city{Lahore}
	\country{Pakistan}
}

\author{Mudassir Shabbir}

\email{mudassir.shabbir@itu.edu.pk}
\affiliation{%
	\institution{Information Technology University}
	\city{Lahore}
	\country{Pakistan}
}

\author{Waseem Abbas}

\email{waseem.abbas@utdallas.edu}
\affiliation{%
	\institution{University of Texas at Dallas}
	\city{Dallas}
	\country{USA}
}

\renewcommand{\shortauthors}{Hassan, et al.}

\begin{abstract}

Feature extraction is an essential task in graph analytics. These feature vectors, called graph descriptors, are used in downstream vector-space-based graph analysis models. This idea has proved fruitful in the past, with spectral-based graph descriptors providing state-of-the-art classification accuracy. However, known algorithms to compute meaningful descriptors do not scale to large graphs since: (1) they require storing the entire graph in memory, and (2) the end-user has no control over the algorithm’s runtime. In this paper, we present streaming algorithms to approximately compute three different graph descriptors capturing the essential structure of graphs. Operating on edge streams allows us to avoid storing the entire graph in memory, and controlling the sample size enables us to keep the runtime of our algorithms within desired bounds. We demonstrate the efficacy of the proposed descriptors by analyzing the approximation error and classification accuracy. Our scalable algorithms compute descriptors of graphs with millions of edges within minutes. Moreover, these descriptors yield predictive accuracy comparable to the state-of-the-art methods but can be computed using only 25\% as much memory. 

\end{abstract}

\begin{CCSXML}
<ccs2012>
<concept>
<concept_id>10010147.10010257.10010293.10003660</concept_id>
<concept_desc>Computing methodologies~Classification and regression trees</concept_desc>
<concept_significance>500</concept_significance>
</concept>
<concept>
<concept_id>10010147.10010178.10010187</concept_id>
<concept_desc>Computing methodologies~Knowledge representation and reasoning</concept_desc>
<concept_significance>500</concept_significance>
</concept>
<concept>
<concept_id>10010147.10010919.10010172</concept_id>
<concept_desc>Computing methodologies~Distributed algorithms</concept_desc>
<concept_significance>100</concept_significance>
</concept>
</ccs2012>
\end{CCSXML}

\ccsdesc[500]{Computing methodologies~Classification and regression trees}
\ccsdesc[500]{Computing methodologies~Knowledge representation and reasoning}
\ccsdesc[100]{Computing methodologies~Distributed algorithms}

\keywords{Graph Descriptor, Edge Stream, Graph Classification}

\maketitle

\section{Introduction}
\label{sec:intro}

Graph analysis has a wide array of applications in various domains, from classifying chemicals based on their carcinogenicity~\cite{helma2001predictive} to determining the community structure in a friendship network~\cite{yanardag2015deep} and even detecting discontinuities within instant messaging interactions~\cite{berlingerio2013network}. The fundamental building block for analysis is a pairwise similarity (or distance) measure between graphs. However, efficient computation of such a measure is challenging: even the best-known solution for determining whether a pair of graphs are isomorphic has a quasi-polynomial runtime. Similarly, computing Graph Edit Distance~\cite{sanfeliu1983distance}, the minimum number of node/edge addition/deletions to interchange between two graphs is \textsc{NP-Hard}.

A relatively pragmatic approach is constructing fixed dimensional descriptors (vector embeddings) for graphs, allowing classical data mining algorithms that operate on vector spaces. Existing models using this approach can be categorized into (1) supervised models, which use deep learning methods to construct vector embeddings based on optimizing a given objective function~\cite{morris2019weisfeiler,xu2018powerful,Jin2021Toward} and (2) unsupervised models, which are based on graph-theoretic properties such as degree~\cite{sge,verma2017hunt}, the Laplacian eigenspectrum~\cite{kondor2016multiscale}, or the distribution of a fixed number of subgraphs~\cite{shervashidze2011weisfeiler,shervashidze2009efficient,ahmed2020interpret,shao2021motif,Mehri2021Mining,Duong2021Density}.

Unsupervised models construct general-purpose descriptors and do not require prior training on datasets. This approach has yielded great success; for example, descriptors based on spectral features (i.e., the graph's Laplacian) provide excellent results on benchmark graph classification datasets~\cite{tsitsulin2018netlsd,verma2017hunt}. The order (number of vertices) and size (number of edges) of the graph and the number and nature of features computed directly determine the runtime and memory costs of the methods. By computing more statistics, one can construct more expressive descriptors. However, this approach does not scale well to real-world graphs due to their growing magnitudes~\cite{koutra2016deltacon}.

Instead of storing and processing the entire graph, processing graphs as streams---one edge at a time---is a viable approach for limited memory settings~\cite{ahmed2013sample}. The features are approximated from a representative sample of fixed size. This approach of trading-off accuracy for time and space complexity has yielded promising results on various graph analysis tasks such as graphlet counting~\cite{Chen:2017:UFE:3110025.3110042}, butterfly counting~\cite{aida2022sgrapp,Sanei-Mehri:2019:FBE:3357384.3357983}, and triangle counting~\cite{shin2018tri,stefani2017triest}; despite storing a fraction of edges, these models have produced unbiased estimates with reasonably low error rates. Based on the success of these methods, our descriptors are designed to compute graph representations from edge streams, allowing us to compute features without storing the entire graph. In contrast, all existing descriptors and representation paradigms require storing the entire graph in memory.

This work is an extension of~\cite{hassan2020estimating}, wherein we proposed descriptors based on features obtained from graph streams. These descriptors are inspired by two existing works, the \textsc{Graphlet Kernel}~\cite{shervashidze2009efficient} and \textsc{NetSimile}~\cite{berlingerio2013network}, which compute local graph statistics as features. 
In this paper, we propose a new descriptor based on \textsc{NetLSD}~\cite{tsitsulin2018netlsd} along with the proofs and experiments showcasing the said descriptor's correctness and efficacy. We perform experiments on new benchmark datasets and provide data visualization of our proposed and \textsc{NetLSD} based embeddings using $t$-SNE.

\begin{figure}[!h]
\includegraphics[width=.95\linewidth]{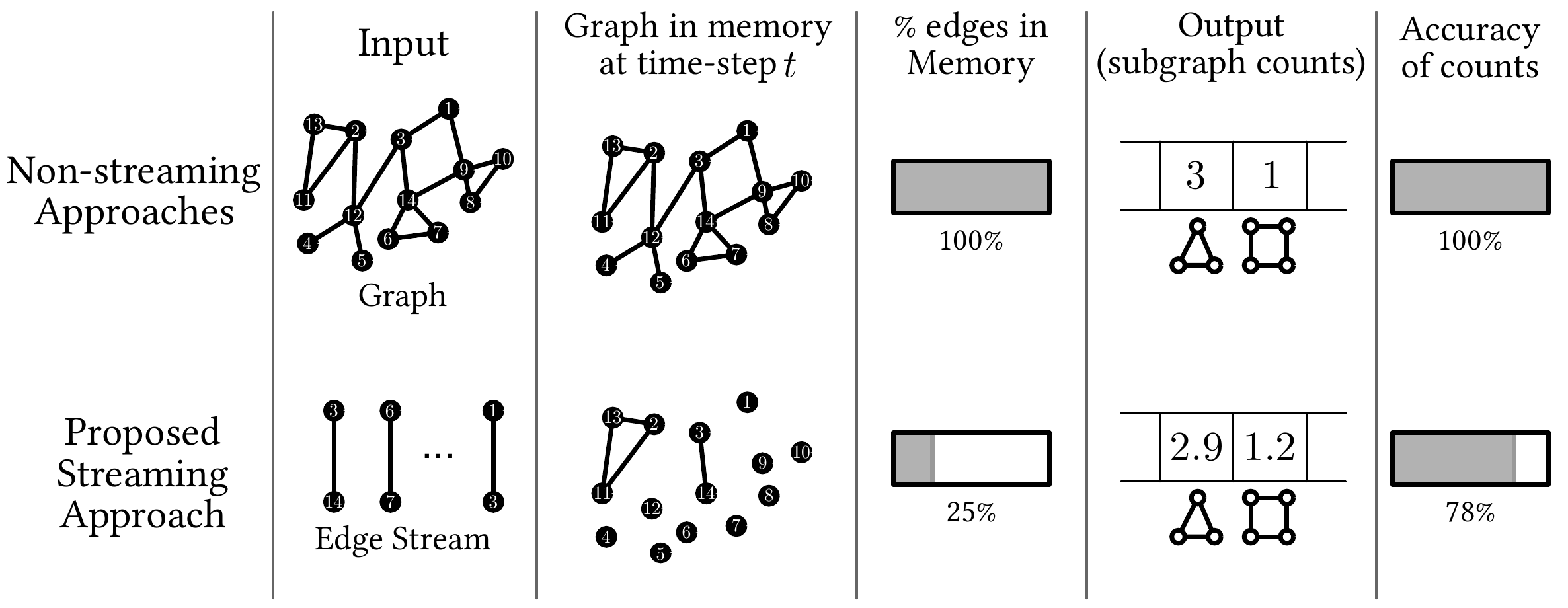}
\caption{This figure depicts the contrast between the typical approach for computing descriptors and our proposed approach. The descriptor in this example represents a graph by the counts of select subgraphs. Note how we tradeoff accuracy for memory consumption by keeping only a fraction of the graph in memory.}
\end{figure}

Our contributions are summarized as follows:
\begin{itemize}
\item We propose simple graph descriptors that run on edge streams.
\item We provide proofs to show how the features used in \textsc{NetSimile}~\cite{berlingerio2013network} and \textsc{NetLSD}~\cite{tsitsulin2018netlsd} can be computed using subgraph counts.
\item We restrict our algorithms' time and space complexity to scale linearly (for a fixed budget) in the order and size of the graph. We provide theoretical bounds on the time and space complexity of our algorithms.
\item Empirical evaluation on benchmark graph classification datasets demonstrates that our descriptors are comparable to other state-of-the-art descriptors with respect to classification accuracy. Moreover, our descriptors can scale to graphs with millions of nodes and edges because we do not require storing the entire graph in memory. 
\item We perform data visualization to show the (global) distribution of data points in the proposed and state-of-the-art (SOTA) descriptors. The visualization results show that the \textsc{santa} preserves the data distribution better \textsc{gabe} and \textsc{maeve} and is comparable to the SOTA descriptor, \textsc{NetLSD}.

\end{itemize}

The remaining paper is organized as follows. We review some of the related work in Section~\ref{sec:rw} and give a formal problem description in Section~\ref{sec:prelim}. We provide detail of our descriptors in Section~\ref{sec:sol}. Section~\ref{sec_experimental_evaluation} contains the experimental evaluation detail, including dataset statistics, preprocessing, hyperparameter values, and data visualization. In Section~\ref{sec:experiments} we report the experimental results of our method. Finally, we conclude the paper in Section~\ref{sec:conclusion}.

\section{Related Work} \label{sec:rw}

In this section, we review some closely related work on graph analysis. We discuss some distance/similar measures between graphs that are used in downstream machine learning algorithms. We also provide an overview of the basic paradigms for graph representation learning. 

\subsection{Pairwise Proximity Measure between Graphs}

A fundamental building block for analyzing large graphs is evaluating pairwise similarity/distance between graphs. The \textit{direct approach} to computing pairwise proximity considers the entire structure of both graphs. A simple and best-known distance measure between graphs is the {\em Graph Edit Distance} (\textsc{ged})~\cite{sanfeliu1983distance}. \textsc{ged}, like edit distance between sequences, counts the number of insertions, deletions, and substitutions of vertices and/or edges that are needed to transform one graph to the other. Runtimes of computing \textsc{ged} between two graphs are computationally prohibitive, restricting its applicability to graphs of very small orders and sizes. Another distance measure is based on permutations of vertices of one graph such that an error norm between the adjacency matrices of two graphs is minimum. Computing this distance and even relaxation of this distance is computationally expensive \cite{babai2016graph,bento2018family}. When there is a valid bijection between vertices of the two graphs, then a similar measure, \textsc{DeltaCon}~\cite{koutra2016deltacon}, yields excellent results. However, requiring a valid bijection limits the applicability of \textsc{DeltaCon} only to a collection of graphs on the same vertex set. 

\vskip.05in
The representation learning approach for graph analysis maps graphs into a vector space. Vector space machine learning algorithms are employed using a pairwise distance measure between the vector representations of graphs. We discuss three broad approaches in this vein.

\subsection{Kernel-Based Machine Learning Methods}

The \textit{kernel-based} machine learning methods represent each non-vector data item to a high dimensional vector. The feature vectors are based on counts (spectra) of all possible sub-structures of some fixed magnitude in the data item. A kernel function is then defined, usually as the dot-product of the pair of feature vectors. The pairwise kernel values between objects constitute a positive semi-definite matrix and serve as a similarity measure in the machine learning algorithm (e.g., SVM and kernel PCA). Explicit construction of feature vectors is computationally costly due to their large dimensionality. Therefore, in the so-called {\em kernel trick}, kernel values are directly evaluated based on objects. Kernel methods have yielded great successes for a variety of data such as images and sequences~\cite{Bo_ImageKernel,Kuksa_SequenceKernel,Farhan_SequenceKernel}. The most prominent graph kernels are the shortest-Path~\cite{borgwardt2005shortest}, Graphlet~\cite{shervashidze2009efficient}, the Weisfeller-Lehman~\cite{shervashidze2011weisfeiler}, and the hierarchical~\cite{kondor2016multiscale} kernels. The computational and space complexity of the kernel matrix make kernel-based methods infeasible for large datasets of massive graphs.

\subsection{Deep Learning Based Methods} 

The deep learning approach to representation learning is to train a \textit{neural network} for embedding objects into Euclidean space. The goal here is to map `similar' objects to `close-by' points in $\mathbb{R}^d$. Deep learning-based methods and domain-specific techniques have been successfully used for embedding nodes in networks~\cite{duran2017learning,grover2016node2vec,cao2015grarep} and graphs~\cite{xu2018powerful,morris2019weisfeiler,Yang2020jane,yang20183}. Vector-space-based machine learning methods are then employed on these embeddings for data analysis. However, these approaches are data-hungry and computationally prohibitive~\cite{Shakeel2020Multi}, hindering their scalability to graphs of large orders and sizes.

\subsection{Descriptor Computation Methods}

The \textit{descriptor} learning paradigm differs from kernel methods in that the dimensionality of the feature vectors is much smaller than the kernel-based features. Unlike neural network-based models, the features are explainable and hand-picked using domain-specific knowledge~\cite{berlingerio2013network,Ali202Attribute}. 
One such graph descriptor, \textsc{NetSimile}~\cite{berlingerio2013network}, represents a graph by a vector of aggregates of various vertex-level features. It considers seven features for each vertex, such as degree, clustering coefficient, and parameters of vertices' neighbors and their ``ego-networks,'' and applies the aggregator functions, such as median, mean, standard deviation, skewness, and kurtosis, across each feature. Stochastic Graphlet Embedding~\cite{sge} proposes a graph descriptor based on random walks over graphs to extract graphlets (sub-structures) of increasing order. Similar to this sub-structural approach is the Higher Order Structure Descriptor~\cite{ahmed2020interpret}, which iteratively compresses graphlets within a graph to generate ``higher-order'' graphs and constructs histograms of the graphlet counts in each graph. More recently, \textsc{feather} was introduced as a descriptor that computes node-level feature vectors using a complex characteristic function and aggregates these to construct graph embeddings~\cite{rozemberczki2020feather}. There has been a trend towards using graph spectra \cite{AHMAD2020Combinatorial,Ahmad2016AusDM,Tariq2017Scalable} to learn descriptors \cite{verma2017hunt,tsitsulin2018netlsd}.
These descriptors are relatively computationally expensive but have excellent classification performance.
An exact method, Von Neumann Graph Entropy (VNGE) is proposed in~\cite{braunstein2006laplacian,chen2019fast} for graph comparison. Being an exact method, VNGE does not scale to large graphs. An approximate solution of NetLSD and VNGE, called SLaQ~\cite{tsitsulin2020just}, computes spectral distances between graphs with multi-billion nodes and edges. Although computationally efficient, SLaQ keeps the entire graph in the memory during the processing, making it costly in terms of space efficiency.

Most of the above approaches require multiple passes over the entire input graph. The resulting space complexity renders them applicable only to graphs of small orders and sizes. On the other hand, real-world graphs are dynamic and enormous in their magnitude. Algorithms that perform a single pass over the input stream and have low memory requirements~\cite{Ali2019Detecting} are best suited for modern-day graphs. An algorithm that computes the output with provable approximation guarantees is sufficient for the single-pass and sub-linear memory requirements. There have been few recent algorithms for counting specific substructures in a streamed graph owing to the inherent difficulty of the streaming model. These include approximately computing the number of triangles~\cite{stefani2017triest} in graphs, induced subgraphs of order three and four~\cite{Chen:2017:UFE:3110025.3110042} in graphs, and cycles of length four in bipartite graphs~\cite{Sanei-Mehri:2019:FBE:3357384.3357983}. 

\section{Preliminaries}
\label{sec:prelim}

In this section, we discuss the necessary prerequisites required to follow our work. We describe the graph nomenclature, followed by the description of graph descriptors, streams, and constraints imposed on our algorithm.

\subsection{Graph Nomenclature}
\label{sec:note}

This section gives relevant notation and terminology for the rest of the paper, followed by a precise formulation of our main problem. A description of the notations for common terms used in this paper has been provided in Table~\ref{tab:not:overa_1}. Notation tables specific to each descriptor have been provided in their sections. 

Let $G = \left(V_G, E_G\right)$ be an undirected graph, where $V_G$ is the set of vertices and $E_G$ is the set of edges. We denote vertices of $G$ by integers in the range $[0,|V_G|-1]$. We refer to $|V_G|$ and $|E_G|$ as the order and size of $G$, respectively. In this paper, we only consider simple graphs (i.e., graphs with no self-loops and multi-edges) and unweighted graphs. For a vertex  $v \in V_G$, we denote by $N_G\left(v\right)$, the set of neighbors of $v$, i.e., the set of vertices that are adjacent to $v$. More formally, $N_G\left(v\right) = \{ u : \left(u, v\right) \in E_G \}$. The degree of a vertex $v$ is denoted by $d^v_G$, i.e.,  $d^v_G:= |N_G\left(v\right)|$. A pair of vertices $u,v\in V_G$ are said to be connected if there is a path between $u$ and $v$, i.e., there exists a sequence of vertices $u=v_1,\ldots,v_k = v$, where for $1\leq i \leq k-1$, $(v_i,v_{i+1})\in E_G$. The length of a path is the number of vertices in it. A graph is called connected iff all pairs of vertices in $V_G$ are connected.

A graph $G' = \left(V_{G'}, E_{G'}\right)$ is called a subgraph of $G = (V_G,E_G)$ if $V_{G'} \subseteq V_G$ and $E_{G'} \subset E_G$ such that edges in $E_{G'}$ are incident only on the vertices present in $V_{G'}$, i.e., $E_{G'} \subseteq \{ \left(u, v\right) : \left(u, v\right) \in E_G \land u, v \in V_{G'}\}$. If all edges incident on vertices in $V_{G'}$ are in $E_{G'}$ ($E_{G'} = \{ \left(u, v\right) : \left(u, v\right) \in E_G \land u, v \in V_{G'}\}$), then $G'$ is called an induced subgraph of $G$.

When vertices of a graph $G_1$ can be relabelled in such a way that we get another graph $G_2$, then we say that $G_1$ and $G_2$ are isomorphic. In other words, $G_1$ and $G_2$ are isomorphic iff there exists a bijection $\pi: V_{G_1} \rightarrow V_{G_2}$ such that $E_{G_2} = \{ (\pi(u), \pi(v)) : (u,v) \in E_{G_1} \}$. For a graph $F = \left(V_{F}, E_{F}\right)$, let $H^{F}_G$ (resp., $\widehat{H}^{F}_G$) be the set of subgraphs (resp., induced subgraphs) of $G$ that are isomorphic to $F$.

\begin{table}[h!]
\centering
\begin{tabular}{cl}
\toprule
\textbf{Notation}  & \textbf{Description}\\ \midrule
$G, F, G'$ & Common terms for graphs \\
$V_G, E_G$ & Vertex and edge set for a graph $G$ \\[.02in]
$N_G(v), d^v_G$ & Neighborhood and degree of a vertex $v \in V_G$ \\[.02in]
$S, e_t$ & A stream of edges, and the edge arriving at time-step $t$ \\[.04in]
$H^F_G$ & Set of subgraphs of $G$ isomorphic to $F$ \\[.04in]
$\widehat{H}^F_G$  & Set of induced subgraphs of $G$ isomorphic to $F$ \\[.04in]
$N^F_G$ & Estimate of $|H^F_G|$ \\[.02in]
$p^F_t$ & Probability of detecting $F$ in $\widetilde{E_G}$, at the $t^{th}$ time-step \\ \bottomrule
\end{tabular}
\caption{Notation table for common terms used throughout the paper.}
\label{tab:not:overa_1}
\end{table}

\subsection{Graph Descriptors and Streams}\label{sec_graph_descr_and_stream}

A graph descriptor is a mapping from the family of all possible graphs (undirected, unweighted and simple, in our case) to a set of $d$-dimensional real vectors. More formally, let $\mathcal{G}$ be the set of all possible graphs. A descriptor $\varphi$ is a function, $\varphi: \mathcal{G} \rightarrow \mathbb{R}^d$. The primary motivation for using descriptors for graph analysis is to map graphs (possibly of varying sizes and orders) into a fixed-dimensional vector space, independent of the representation of graphs~\cite{berlingerio2013network,tsitsulin2018netlsd}. A direct comparison of the number of certain subgraphs in two graphs of different orders and/or sizes is not very meaningful, as larger graphs will naturally have more subgraphs. Moreover, descriptors enable the application of vector-space-based machine learning algorithms for graph analysis tasks, often using the $\ell_2$-distance (Euclidean distance) as the proximity measure. Our descriptors are graph-theoretic and apply to graphs of varying magnitudes.

Let $S = e_1, e_2, \ldots, e_{|E_G|}$ be a sequence of edges in a fixed order, i.e., $e_t = \left(u_t, v_t\right)$ is the $t^{th}$ edge. We assume an online setting wherein the input graph is modeled as a stream of edges, i.e., we assume that elements of $S$ are input to the algorithm one at a time. The following constraints are imposed on our algorithms: 

\begin{itemize}
    \item[C1]{\bf Constant Number of Passes:} The algorithm must do processing in a constant number of passes over the graph stream. Our algorithms require two passes at most.
    \item[C2]{\bf Limited Space:} The algorithm can store at most $b$ edges during the execution. We refer to $b$ as the budget and $\widetilde{E_G}$ as the sample.
    \item[C3]{\bf Linear Complexity:} The time and space complexity of the algorithms must be linear in the order and size of the graph, with fixed $b$.
\end{itemize}

\subsection{Estimating Connected Subgraph Counts on Edge Streams}
\label{sec:est}

In this section, we formally define the subgraph estimation problem within our constraints and describe the solution to this problem used throughout our proposed descriptors.

\begin{problem}[Connected Subgraph Estimation on Edge Streams]
\label{prob:subest}
Let $S$ be a stream of edges, $ e_1, e_2, \ldots, e_{|E_G|}$ for some graph $G = (V_G, E_G)$. Let $F = (V_F, E_F)$ be a connected graph such that $|V_F| \ll |V_G|$ (i.e. $F$ is significantly smaller than $G$). Compute an estimate, $N^F_G$, of $|H^F_G|$ while storing at most $b$ edges at any given instant.
\end{problem}

The basic strategy for solving Problem~\ref{prob:subest} involves two things: (1) an algorithm that counts the number of instances of a subgraph $F$ that an edge belongs to, and (2) a sampling scheme that allows us to compute the probability of detecting an instance of $F$ in our sample, denoted by $p^F_t$, at the arrival of the $t^{th}$ edge~\cite{Chen:2017:UFE:3110025.3110042,shin2018tri,stefani2017triest}.
The basic streaming algorithm maintains a representative sample of edges from the stream, and for each next edge $e_t$, it estimates the number of subgraphs in the sample containing the edge $e_t$. This estimate is scaled according to the sample size. 
At the arrival of $e_t$, the estimate of $|H^F_G|$ is incremented by $1/p^F_t$ for all instances of $F$ that $e_t$ belongs to in our sample $\widetilde{E_G} \cup \{e_t\}$. A pseudo-code is provided in Algorithm~\ref{alg:pseudo}. This approach computes estimates equal to $|H^F_G|$ on expectation.

\begin{theorem}
\label{thm:unbiased}
Algorithm~\ref{alg:pseudo} provides unbiased estimates: $\mathbb{E}[N^F_G] = |H^F_G|$.
\end{theorem}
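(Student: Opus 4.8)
The plan is to prove the claim by \emph{linearity of expectation}, decomposing the estimate $N^F_G$ into per-copy contributions indexed by the elements of $H^F_G$. For each copy $h \in H^F_G$ (a subgraph of $G$ isomorphic to $F$), let $t(h)$ denote the time-step at which the \emph{last} of its $|E_F|$ edges arrives in the stream $S$; since the stream fixes a total order on $E_G$, the index $t(h)$ is well defined and deterministic. Define the random variable $X_h$ to be the amount that $h$ contributes to $N^F_G$ over the whole run of Algorithm~\ref{alg:pseudo}. I would first argue that $h$ can be detected at most once, namely only when its last edge $e_{t(h)}$ arrives: before that moment at least one edge of $h$ has not yet been seen, so $h$ cannot be completed in $\widetilde{E_G} \cup \{e_t\}$ for any earlier $t$, and the algorithm only enumerates instances of $F$ that contain the \emph{current} edge $e_t$, so no later edge can trigger a second count of $h$. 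This ``last-edge'' bookkeeping is exactly what prevents overcounting and lets me write $N^F_G$ as a clean sum of the $X_h$, to which linearity of expectation applies (no independence is needed).

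Next I would compute $\mathbb{E}[X_h]$. When $e_{t(h)}$ arrives, the copy $h$ is counted, and $N^F_G$ is incremented by $1/p^F_{t(h)}$, if and only if the remaining $|E_F| - 1$ edges of $h$ are all present in the sample $\widetilde{E_G}$ at that instant. Writing $A_h$ for this event, we have $X_h = (1/p^F_{t(h)}) \cdot \mathbf{1}[A_h]$, and hence $\mathbb{E}[X_h] = (1/p^F_{t(h)}) \cdot \Pr[A_h]$. The crux of the argument is the identity $\Pr[A_h] = p^F_{t(h)}$: by construction, $p^F_t$ is \emph{defined} as the probability that a given set of $|E_F|-1$ previously-seen edges is simultaneously retained in $\widetilde{E_G}$ when the $t$-th edge arrives. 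Substituting gives $\mathbb{E}[X_h] = 1$ for every copy $h$.

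Finally, since $N^F_G = \sum_{h \in H^F_G} X_h$, linearity of expectation yields
\[
\mathbb{E}[N^F_G] = \sum_{h \in H^F_G} \mathbb{E}[X_h] = \sum_{h \in H^F_G} 1 = |H^F_G|,
\]
which is the desired conclusion.

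The step I expect to be the main obstacle is justifying $\Pr[A_h] = p^F_{t(h)}$ rigorously. This equality is not automatic: it hinges on the precise sampling scheme used to maintain $\widetilde{E_G}$ (e.g.\ reservoir-style sampling with budget $b$) and on verifying that the retention probability of a fixed $(|E_F|-1)$-subset of edges depends only on the arrival time $t$, and not on which particular edges or which copy $h$ is considered. I would therefore need to pin down the sampling rule, confirm that $p^F_t$ as given in the preliminaries indeed equals this retention probability, and check the boundary cases (for instance, when fewer than $b$ edges have arrived, so that all seen edges are kept with probability one). Once this sampling lemma is in place, the per-copy expectation and the final summation are routine.
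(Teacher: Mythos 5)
Your proposal is correct and follows essentially the same route as the paper: the paper also defines $X_h = 1/p^F_t$ when $h$ is detected at the arrival of its last edge (and $0$ otherwise), notes $\mathbb{E}[X_h] = (1/p^F_t)\cdot p^F_t = 1$, and concludes by linearity of expectation. The points you flag as needing care --- that each copy is counted at most once, and that $p^F_t$ under reservoir sampling equals the retention probability of the other $|E_F|-1$ edges --- are exactly what the paper addresses in the remarks and the formula for $p^F_t$ immediately following the theorem.
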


\begin{proof}
Let $h$ be a subgraph in $H^F_G$. We define $X_h$ as a random variable such that $X_h = 1/p^F_t$ if $h$ is detected at the arrival of $e_t$, and 0 otherwise. 
Clearly, $N^F_G = \sum_{h \in H^F_G} X_h$, and $\mathbb{E}[X_h] = (1/p^F_t)\times p^F_t = 1$. Thus,

\begin{equation*}
    \mathbb{E}\left[N^F_G\right] = \mathbb{E}\bigg[\sum_{h \in H^F_G} X_h\bigg] = \sum_{h \in H^F_G} \mathbb{E}\left[X_h\right] = \sum_{h \in H^F_G} 1 = \left|H^F_G\right|
\end{equation*}

\end{proof}

When $e_t$ arrives, the only subgraphs counted are the ones that $e_t$ belongs to. This ensures that no subgraph is counted more than once. Due to its previous success in subgraph estimation~\cite{Chen:2017:UFE:3110025.3110042,shin2018tri,stefani2017triest}, we utilize reservoir sampling~\cite{Vitter:1985:RSR:3147.3165}. 
With reservoir sampling, the probability of detecting a subgraph $F$ at the arrival of $e_t$ is equal to the probability that $F$'s other $|E_F| -1$ edges are present in the sample after $t-1$ time-steps. Thus, we can write:

\begin{equation*}
p^F_t = \min \bigg\{ 1, \prod^{|E_F| -2}_{i = 0} \frac{b - i}{t - 1 -i} \bigg\}
\end{equation*}

\begin{algorithm}[ht]
\caption{Compute-Estimate($S,F,b$)}
\label{alg:pseudo}
\begin{algorithmic}[1]

\State $\widetilde{E_G} \gets \emptyset$ 
\State $N^F_G \gets 0$ 
\For{$t=1$ to $|E_G|$ } 
 \State $G' \gets (V_G, \widetilde{E_G} \cup \{e_t\})$
 \State $N \gets $ number of instances of $F$ in $G'$ that $e_t$ belongs to. 
 \State $N^F_G \gets N^F_G  + N \times \nicefrac{1}{p^F_t}$
 \State Discard or store $e_t$ in $\widetilde{E_G}$, based on the sampling method and $b$
 \EndFor
\State \Return $N^F_G$
\end{algorithmic}
\end{algorithm}

To analyze the effect of the budget on our estimates, we derive an upper bound for the variance of $N_G^F$. Although loose, the bound shows that better estimates are obtained for any connected graph $F$ with increasing $b$.

\begin{theorem}
\label{thm:var}
Let $N^F_G$ be the estimate of $|H^F_G|$ obtained using Algorithm~\ref{alg:pseudo} with reservoir sampling. Then,
\begin{equation*}
    \mathrm{Var}[N^F_G] \leq |H^F_G|^2 \prod^{|E_F| -2}_{i = 0} \dfrac{|E_G| - i}{b -i}
\end{equation*}
\end{theorem}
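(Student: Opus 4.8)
The plan is to expand the variance as a double sum over pairs of target instances and bound every summand by a single worst-case factor. Writing $N^F_G = \sum_{h \in H^F_G} X_h$ exactly as in the proof of Theorem~\ref{thm:unbiased}, where $X_h = 1/p^F_{t_h}$ when the instance $h$ is detected (at the arrival $e_{t_h}$ of its last edge) and $0$ otherwise, I would first extract the marginal that is implicit in the unbiasedness proof: since $\mathbb{E}[X_h] = (1/p^F_{t_h})\Pr[h\text{ detected}] = 1$, the detection event $A_h := \{h \text{ detected}\}$ has probability exactly $\Pr[A_h] = p^F_{t_h}$. I would also abbreviate the target factor as $\beta := \prod_{i=0}^{|E_F|-2}\frac{|E_G|-i}{b-i}$ and note that $\beta$ is precisely $1/p^F_t$ evaluated at the out-of-range time $t = |E_G|+1$. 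Because $1/p^F_t$ is non-decreasing in $t$ (larger $t$ shrinks each factor $\frac{b-i}{t-1-i}$) and every instance is detected at some $t_h \le |E_G|$, this yields the key monotonicity fact $1/p^F_{t_h} \le \beta$ for all $h$, together with $\beta \ge 1$ (the case $|E_G| \le b$ being trivial, since the sample is then the whole graph and the variance is $0$).

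The heart of the argument is the pairwise bound $\mathbb{E}[X_h X_{h'}] \le \beta$, valid for every ordered pair $(h,h')$, including $h = h'$. For $h = h'$ this is immediate: $\mathbb{E}[X_h^2] = (1/p^F_{t_h})^2\,\Pr[A_h] = 1/p^F_{t_h} \le \beta$. For $h \ne h'$ I would write $\mathbb{E}[X_h X_{h'}] = \frac{1}{p^F_{t_h} p^F_{t_{h'}}}\Pr[A_h \cap A_{h'}]$ and apply the trivial inequality $\Pr[A_h \cap A_{h'}] \le \min\{\Pr[A_h], \Pr[A_{h'}]\} = \min\{p^F_{t_h}, p^F_{t_{h'}}\}$, which collapses the product to $\mathbb{E}[X_h X_{h'}] \le 1/\max\{p^F_{t_h}, p^F_{t_{h'}}\} \le 1/p^F_{t_h} \le \beta$. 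Summing over all $|H^F_G|^2$ ordered pairs gives $\mathbb{E}[(N^F_G)^2] = \sum_{h,h'}\mathbb{E}[X_h X_{h'}] \le |H^F_G|^2\,\beta$, and subtracting $(\mathbb{E}[N^F_G])^2 = |H^F_G|^2$ (Theorem~\ref{thm:unbiased}) yields $\mathrm{Var}[N^F_G] \le |H^F_G|^2(\beta - 1) \le |H^F_G|^2\,\beta$, which is the claimed bound.

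The only genuinely delicate point---and the reason the bound is, as advertised, loose---is the joint detection probability $\Pr[A_h \cap A_{h'}]$ under reservoir sampling, where the sample contents at the two (generally distinct) detection times $t_h, t_{h'}$ are correlated and the instances may even share edges. Rather than analyze these correlations, my approach deliberately discards them through $\Pr[A_h \cap A_{h'}] \le \min\{\Pr[A_h], \Pr[A_{h'}]\}$; this is what makes the estimate crude but also what makes it hold for an arbitrary connected pattern $F$ with no case analysis on how $h$ and $h'$ overlap. If one wanted a sharper result, this is exactly the step to refine---separating edge-disjoint pairs, where a near-independence estimate is available, from overlapping pairs---but for the stated claim the one-line worst-case bound suffices, and the monotonicity $1/p^F_{t_h} \le \beta$ does all the remaining work.
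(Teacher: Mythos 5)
Your proof is correct and follows the same overall decomposition as the paper's: write $N^F_G=\sum_{h} X_h$, expand the variance (equivalently, the second moment) as a double sum over ordered pairs of instances, and bound every summand by a single worst-case factor. The one place where you genuinely diverge is the treatment of the cross terms: the paper bounds $\mathrm{Cov}[X_h,X_{h'}]\le\sqrt{\mathrm{Var}[X_h]\,\mathrm{Var}[X_{h'}]}$ by Cauchy--Schwarz and then uses $\mathrm{Var}[X_h]=1/p^F_{t_h}-1\le 1/p^F_{|E_G|}$, whereas you bound $\mathbb{E}[X_hX_{h'}]$ directly via $\Pr[A_h\cap A_{h'}]\le\min\{\Pr[A_h],\Pr[A_{h'}]\}$ and subtract the squared mean at the end. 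Both routes discard the sampling correlations entirely and are equally valid; yours is arguably more self-contained (it never needs the individual variances, and the $\min$ inequality makes explicit exactly where the looseness enters and how one would sharpen it), while the paper's Cauchy--Schwarz version lands on the marginally tighter constant $\prod_{i}\frac{|E_G|-1-i}{b-i}$, i.e.\ $1/p^F_{|E_G|}$ rather than your $\beta=1/p^F_{|E_G|+1}$, which is then weakened to the stated bound. Your dispatch of the trivial case and the monotonicity of $1/p^F_t$ in $t$ mirror the paper's opening observation that the claim is immediate when $b\ge |E_G|-1$.
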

\begin{proof}
The theorem is true when $b \geq |E_G| - 1$. Thus, we focus on the case when $b < |E_G| - 1$. 
As in Theorem~\ref{thm:unbiased}, we define $X_h$ as a random variable such that $X_h = 1/p^F_t$ if $h$ is detected at the arrival of $e_t$, and 0 otherwise. 
It is clear from the definition of $p^F_t$ that $ p^F_t \geq p^F_{t+1} $ for all $t$, and thus $p^F_t \geq p^F_{|E_G|}$. Hence, $\mathrm{Var}[X_h] = \mathbb{E}[X_h^2] - \mathbb{E}[X_h]^2 = 1/p^F_t - 1 \leq 1/p^F_{|E_G|}$. 
The Cauchy-Schwarz inequality can be used to bound the total variance like so:
\begin{align*}
    \mathrm{Var}[N^F_G] &= \sum_{h \in H^F_G} \sum_{h' \in H^F_G} \mathrm{Cov}[X_h,X_{h'}] \leq \sum_{h \in H^F_G} \sum_{h' \in H^F_G} \sqrt{\mathrm{Var}[X_h] \mathrm{Var}[X_{h'}]} \\
    &\leq \sum_{h \in H^F_G} \sum_{h' \in H^F_G} \frac{1}{p^F_{|E_G|}} = |H^F_G|^2 \prod^{|E_F| -2}_{i = 0} \frac{|E_G| - 1 - i}{b -i}
\end{align*}
\end{proof}

As observed in~\cite{shin2018tri}, we note that this methodology can be used to estimate vertex counts (the number of subgraphs that each vertex belongs to) as well. Moreover, Theorems~\ref{thm:unbiased} and \ref{thm:var} can also be extended to vertex counts.

\subsection{Improving Estimation Quality with Multiple Workers}

Shin et al. proposed a model for triangle estimation which takes advantage of a master machine and multiple worker machines that work in parallel. Each machine independently receives edge streams, estimates triangle counts, then sends them to the master machine, which aggregates each machine's estimate~\cite{shin2018tri}.
They show that using $W$ worker machines decreases the estimates' variance by a factor of $1/W$. Thus, we use their approach to improve the quality of subgraph estimations used in our descriptors. 

\section{Graph Descriptors}\label{sec:sol}

In this section, we describe three graph descriptors:
Graphlet Amounts via Budgeted Estimates (\textsc{gabe}), Moments of Attributes Estimated on Vertices Efficiently (\textsc{maeve}), and Spectral Attributes for Networks via Taylor Approximation (\textsc{santa}). These are based on the \textsc{Graphlet Kernel}~\cite{shervashidze2009efficient}, \textsc{NetSimile}~\cite{berlingerio2013network}, and \textsc{NetLSD}~\cite{tsitsulin2018netlsd}, respectively. For each descriptor, we describe its features and how it can be computed using subgraph enumeration. We also analyze their algorithms to show that constraints $C1$, $C2$, and $C3$ (from Section~\ref{sec_graph_descr_and_stream}) are met. Each descriptor's details have been summarized in Table~\ref{tab:summary}.

\begin{table}[h!]
\centering
\begin{tabular}{@{}lcccc@{}}
\toprule
\textbf{Name}           & \textbf{Summarized Description}      & \textbf{\# Passes} & \textbf{Time Complexity}     & \textbf{Space Complexity} \\ \midrule
\textsc{gabe}  & Normalized subgraph counts & 1           & $O(b\log b|E_G|)$   & $O(b + |V_G|)$   \\
\textsc{maeve} & Aggregated local features   & 1           & $O(b|E_G| + |V_G|)$ & $O(b + |V_G|)$   \\
\textsc{santa} & Functions on eigenspectrum & 2 & $O(b\log b|E_G|)$ & $O(b + |V_G|)$ \\ \bottomrule
\end{tabular}%

\caption{A summary of the proposed descriptors.} 
\label{tab:summary}
\end{table}

\subsection{GABE: Graphlet Amounts via Budgeted Estimates}
\label{sec:gabe}

The first descriptor we propose is based on normalized subgraph counts. Subgraph counts have been popular in graph classification literature (e.g.,~\cite{shervashidze2009efficient,sge,ahmed2020interpret,shao2021motif}) and have been shown to provide fruitful descriptors by capturing the prevalence of small local structures throughout a graph. 

Let $\mathcal{F}_k$ be the set of graphs with order $k$. 
In their work on the \textsc{Graphlet Kernel}, Shervashidze et al.~\cite{shervashidze2009efficient} propose measuring the similarity between two graphs $G_1$ and $G_2$ by counting the number of graphlets in $\mathcal{F}_k$ and computing the inner product $\langle \phi_k(G_1), \phi_k(G_2) \rangle$, where for a given $k$ and graphs $F_i \in \mathcal{F}_k$:
\begin{equation*}
\phi_k(G) := \frac{1}{{\binom{|V_G|}{k}}} \begin{bmatrix}
\left|\widehat{H}^{F_1}_G\right| & \cdots & 
\left|\widehat{H}^{F_{|\mathcal{F}_k|-1}}_G\right| & 
\left|\widehat{H}^{F_{|\mathcal{F}_k|}}_G\right|
\end{bmatrix}^\intercal
\end{equation*}

\begin{table}[h!]
\centering
\begin{tabular}{cl}
\toprule
\textbf{Notation} & \textbf{Description} \\ \midrule
$k$ & Maximum order of a subgraph enumerated in $G$ by \textsc{gabe} \\[.02in]
$\mathcal{F}$ & Family of all graphs with at most four vertices \\[.02in]
$\mathcal{O}$ & Overlap matrix \\[.04in]
$\mathcal{H^F_G}$ & Vector of subgraph counts \\[.07in]
$\widehat{\mathcal{H^F_G}}$ & Vector of induced subgraph counts \\[.02in] \bottomrule
\end{tabular}
\caption{Notation table for Section~\ref{sec:gabe}.}
\label{tab:not:gabe}
\end{table}

They compute the exact counts of all graphlets in $\mathcal{F}_k$ for $k \in \{3,4,5\}$. Unfortunately, their algorithm uses adjacency matrices and adjacency lists, which take $O(|V_G|^2)$ and $O(|V_G| + |E_G|)$ space, respectively. Moreover, the time complexity is $O(|V_G|d^{k-1})$, where $d = \max_{v \in V_G} d^v_G$ is the maximum degree across all vertices in $G$. Thus, their algorithm does not scale well to large graphs. Although the authors introduce a sampling method to approximate $\phi_k(G)$, it requires storing the entire graph in memory and therefore does not meet our constraints.

We construct our descriptors by estimating subgraph counts and using linear combinations of these counts to compute induced subgraph counts, similar to the methodology used by Chen et al.~\cite{Chen:2017:UFE:3110025.3110042}.
The linear combinations are based on the overlap of graphs of the same order. 
Using this approach, we estimate, for a given graph $G$, $\phi_k(G)$ for $k \in \{2,3,4\}$. Each $\phi_k(G)$ is concatenated to construct our final descriptor. There are 17 graphs with $\leq 4$ vertices, each shown in Figure~\ref{fig:gabe1}. Note that Chen et al. do not discuss the estimation of disconnected subgraphs. We discuss how we compute these in the section to follow.

\subsubsection{Induced Subgraph Counts} 

Let $\mathcal{F} = \{F_1, F_2, \ldots, F_{17}\}$ be the set of all graphs with at most four vertices. Let $\mathcal{H}^{\mathcal{F}}_G$ (resp., $\widehat{\mathcal{H}}^{\mathcal{F}}_G$) be a $|\mathcal{F}|$-dimensional vector where the $i^{th}$ entry corresponds to $|H^{F_i}_G|$ (resp., $|\widehat{H}^{F_i}_G|$).
Let $\mathcal{O}$ be an ``overlap matrix.'' $\mathcal{O}$ is an $|\mathcal{F}| \times |\mathcal{F}|$ matrix such that the element $O(i,j)$ corresponds to the number of subgraphs of $F_j$ isomorphic to $F_i$ when $F_i$ and $F_j$ have the same number of vertices. The value is set to zero when the orders $|V_{F_i}|$ and $|V_{F_j}|$ are not equal.

Observe that $\mathcal{H}^{\mathcal{F}}_G = \mathcal{O}\widehat{\mathcal{H}}^{\mathcal{F}}_G$. 
This is because for a single subgraph $F_i \in \mathcal{F}$, the overlap matrix counts the number of $F_i$'s induced in $G$, and the number of $F_i$'s that occur in induced $F_j$'s for each $F_j \in \mathcal{F}$ such that $F_i$ is a subgraph of $F_j$. Note that $\mathcal{O}$ is invertible since it is an upper triangular matrix. Thus we can compute the vector of induced subgraphs using the formula $\widehat{\mathcal{H}}^{\mathcal{F}}_G = \mathcal{O}^{-1}\mathcal{H}^{\mathcal{F}}_G$. 
Thus, our proposed approach is to compute $N^{F_i}_G$ using our estimation technique, and $\widehat{N}^{F_i}_G$ using the overlap matrix, where
$N^{F_i}_G$ (resp., $\widehat{N}^{F_i}_G$) is the estimate of $|H^{F_i}_G|$ (resp., $|\widehat{H}^{F_i}_G|$).

The estimated counts of each subgraph are computed as follows:
\begin{enumerate}
    \item \textit{Connected Subgraphs.} The graphs $F_6, F_{13}, \ldots, F_{17}$ are computed as described in Section~\ref{sec:est}; edge-centric algorithms were written to enumerate over all instances in the sample $\widetilde{E_G} \cup \{e_t\}$ and increment the estimates as described earlier. The counts of the star graphs, $F_5$ and $F_12$, are computed using the degrees of each vertex and the formulas written in Table~\ref{tab:gabeform}. $F_2$ is simply equal to the number of edges in $G$. 
    \item \textit{Disconnected Subgraphs.} Combinatorial formulas based on the estimates of connected subgraphs, $|E_G|$, and $|V_G|$. Note that the size of $G$ can be computed by keeping track of the number of edges received, and the order can be computed by tracking the maximum vertex label, on account of each vertex being labeled in the range $[0,|V_G|-1]$.
\end{enumerate}

\begin{figure}[h!]
    \centering
    \includegraphics[width=\linewidth]{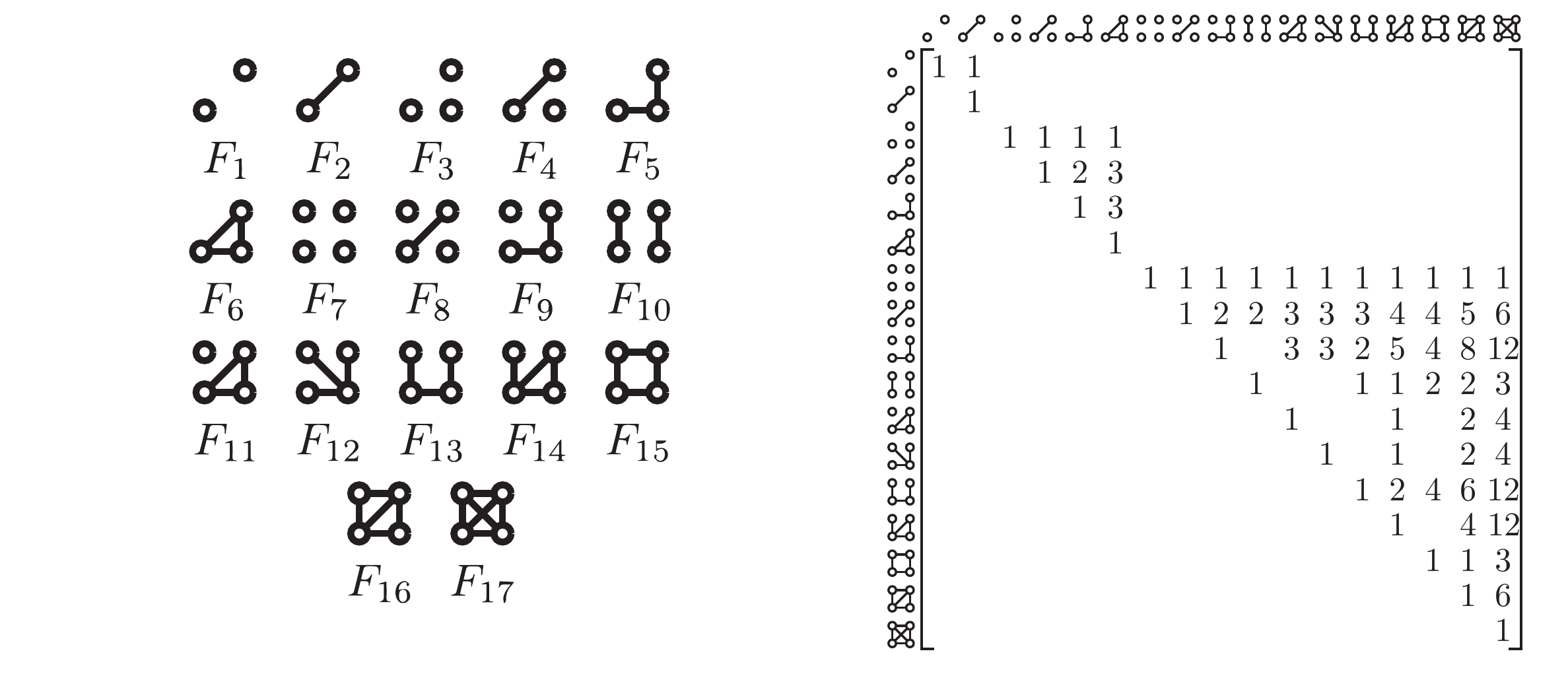}
    \caption{All graphs on at most four vertices, and $\mathcal{O}$, the overlap matrix. Zeros are omitted.}
    \label{fig:gabe1}
\end{figure}

\begin{table}[h!]
\begin{tabularx}{1\textwidth}{cY|cY|cY}
\toprule
\textbf{Graph} & \textbf{Formula}                          & \textbf{Graph} & \textbf{Formula}                             & \textbf{Graph} & \textbf{Formula}                             \\ \midrule
\parbox[c]{1em}{\centering\includegraphics[width=1em]{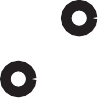}}    & $\binom{|V_G|}{2}$               & \parbox[c]{1em}{\centering\includegraphics[width=1em]{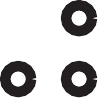}}    & $\binom{|V_G|}{3}$                  & \parbox[c]{1em}{\centering\includegraphics[width=1em]{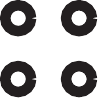}}    & $\binom{|V_G|}{4}$                  \\[.05in]
\parbox[c]{1em}{\centering\includegraphics[width=1em]{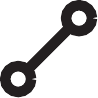}}    & $|E_G|$                          &\parbox[c]{1em}{\centering\includegraphics[width=1em]{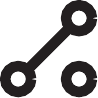}}    & $|E_G|(|V_G|-2)$                    & \parbox[c]{1em}{\centering\includegraphics[width=1em]{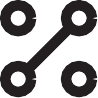}}    & $|E_G|\binom{|V_G|-2}{2}$           \\[.05in]
\parbox[c]{1em}{\centering\includegraphics[width=1em]{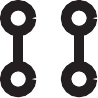}}   & $\binom{|E_G|}{2} - |H^{F_5}_G|$ & \parbox[c]{1em}{\centering\includegraphics[width=1em]{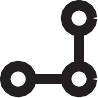}}    & $\sum_{v \in V_G} \binom{d^v_G}{2}$ & \parbox[c]{1em}{\centering\includegraphics[width=1em]{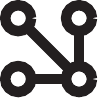}}   & $\sum_{v \in V_G} \binom{d^v_G}{3}$ \\[.07in]
\parbox[c]{1em}{\centering\includegraphics[width=1em]{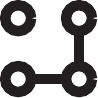}}    & $|H^{F_5}_G|(|V_G|-3)$           & \parbox[c]{1em}{\centering\includegraphics[width=1em]{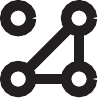}}   & $|H^{F_6}_G|(|V_G|-3)$              & -     & -                                   \\[.04in] \bottomrule
\end{tabularx}
\caption{Graphs and their corresponding subgraph count formulas.}
\label{tab:gabeform}
\end{table}

\subsubsection{Time and Space Complexity} 

Let $G'$ denote the graph represented by $\widetilde{E_G} \cup \{e_t\}$. We assume that $G'$ is stored as an adjacency list, where the list of neighbors for each vertex is stored in a sorted, tree-like structure. Thus, determining if two vertices are adjacent takes $O(\log b)$ time.

The diameter of each connected graph in $\mathcal{F}$ is 2. Thus, for an edge $e_t = (u_t, v_t)$, only vertices at most two hops away from $u_t$ or $v_t$ need to be visited. At most, three adjacency checks are needed to discover each connected graph. Hence, for a single edge, $2\left(\sum_{w \in N_{G'}(u_t)} d^w_{G'} + \sum_{w \in N_{G'}(v_t)} d^w_{G'}\right) \times 3\log{b} = O(b\log{b})$ time is taken to process one edge. Thus, checking the entire graph takes $O(b\log{b}|E_G|)$ time. 
$O(|V_G|)$ integers are stored to keep track of the degrees of each $v \in V_G$. 
Since each value can be accessed in $O(1)$ time, the counts for $F_5$ and $F_{12}$ can be updated each time an edge arrives in $O(1)$ time as well. 

It takes $O(1)$ time to compute the remaining estimates. Thus, the total runtime is $O(b\log{b}|E|)$. Storing the adjacency list and degree array takes $O(b+ |V_G|)$ space.

\subsubsection{Effect of Increasing $k$}

This method may be extended to further $k$ to create richer descriptors. This would require implementing algorithms to find connected components on $k$ vertices, deducing formulas to count the disconnected components, and constructing the overlap matrix to find the induced counts. However, obtaining counts for all subgraphs on $k$ vertices requires finding $k$-cliques, which have $\binom{k}{2}$ edges. The probability of detecting larger cliques in the stream will decrease with increasing $k$. 
Thus, increasing too much larger $k$ is likely to be unfeasible.

\subsection{MAEVE: Moments of Attributes Estimated on Vertices Efficiently}
\label{sec:maeve}

\textsc{NetSimile}~\cite{berlingerio2013network} proposed extracting local features for each vertex and aggregating them by taking various moments over their distribution. The features chosen by the authors are based on four social theories that allowed them to encompass the connectivity, transitivity, and reciprocity among the vertices and the control of information flow across graphs.

Similarly, we extract a subset of those features---chosen because they require at most one pass of the edge stream, listed in Table~\ref{tab:maeve}. As in \textsc{NetSimile}, the mean, standard deviation, skewness, and kurtosis for each feature are computed over the vertices. The only moment used in \textsc{NetSimile} ignored in our work is the median, left out to ensure that only one pass is needed over the vertices' features.

\subsubsection{Extracting Vertex Features}

For a graph $G$, and a vertex $v \in V_G$, $I_G(v)$ (see Table~\ref{tbl_common_notation} for description) is defined as the induced subgraph of $G$ formed by $\{v\} \cup N_G(v)$, i.e., $V_{I_G(v)} = N_G(v) \cup \{v\}$ and $E_{I_G(v)} = \{(u,v) | u,v \in V_{I_G(v)} \land (u,v) \in E_G\}$. Note that $I_G(v)$ is also referred to as the ``egonet'' of $v$. We define  $T_G(v)$ and $P_G(v)$ as the set of triangles that $v$ belongs to and the set of three-paths (paths on three vertices) where $v$ is an end-point, respectively. 
In Theorem~\ref{thm:maeve} (described below), we show that each feature described in Table~\ref{tab:maeve} can be calculated using values for  $d^v_G$, $|T_G(v)|$, and $|P_G(v)|$. Thus, the vertex counts for triangles are estimated for each vertex, as described in Sections~\ref{sec:est} and~\ref{sec:gabe}. Note that unlike in~\ref{sec:gabe}, the three-path estimates are not computed via the formula in Table~\ref{tab:gabeform} since this formula provides no information on the number of three-paths for each vertex. Moreover, the formula $\binom{d^v_G}{2}$ only provides us with the number of three-paths in which $v$ is the middle vertex. Thus, an edge-centric algorithm is employed for each vertex to estimate the number of three-paths it ends at via the stored sample.

\begin{table}[h!]
\centering
\begin{tabular}{cl}
\toprule
\textbf{Notation} & \textbf{Description} \\ \midrule
$I_G(v)$ & Subgraph induced on $\{v\} \cup N_G(v)$ \\ [.03in]
$T_G(v)$ & Number of triangles $v$ belongs to \\ [.03in]
$P_G(v)$ & Number of paths $v$ belongs to, as an endpoint \\  [.03in]
\bottomrule
\end{tabular}
\caption{Notation table for common terms used throughout Section~\ref{sec:maeve}.}
\label{tbl_common_notation}
\end{table}

\begin{table}[h!]
\begin{tabularx}{1\columnwidth}{cYYYc}
\toprule
Degree & Clustering Coefficient & Avg. Degree of $N_G(v)$ & Edges in $I_G(v)$ & Edges leaving $I_G(v)$ \\[.02in] 
\midrule
$d^v_G$ & $|T_G(v)|/\binom{d^v_G}{2}$ & $1+|P_G(v)|/d^v_G$ & $d^v_G + |T_G(v)|$ & $|P_G(v)| - 2|T_G(v)|$ \\[.12in]
\parbox[c]{0.15\columnwidth}{\centering\includegraphics[width=0.13\columnwidth]{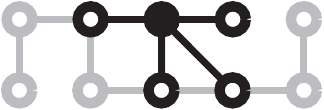}} & \parbox[c]{0.15\columnwidth}{\centering\includegraphics[width=0.13\columnwidth]{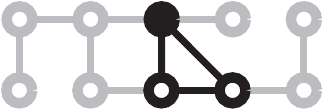}} & \parbox[c]{0.15\columnwidth}{\centering\includegraphics[width=0.13\columnwidth]{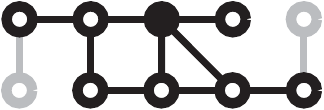}} & \parbox[c]{0.15\columnwidth}{\centering\includegraphics[width=0.13\columnwidth]{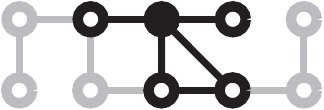}} & \parbox[c]{0.15\columnwidth}{\centering\includegraphics[width=0.13\columnwidth]{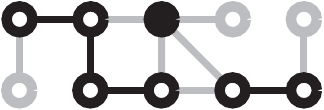}} \\[.04in]
\bottomrule
\end{tabularx}
\caption{Features extracted for each vertex, $v \in V_G$ for \textsc{maeve}, their formulae, and a figure highlighting the relevant edges. The filled-in vertex depicts $v$.}
\label{tab:maeve}
\end{table}

\begin{theorem}
\label{thm:maeve}
Each feature described in Table~\ref{tab:maeve} can be expressed in terms of $d^v_G$, $|T_G(v)|$, and $|P_G(v)|$.
\end{theorem}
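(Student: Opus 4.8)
The plan is to verify the five features of Table~\ref{tab:maeve} one at a time, noting first that the degree entry ($d^v_G$) and the clustering coefficient entry ($|T_G(v)|/\binom{d^v_G}{2}$) are already written purely in terms of $d^v_G$ and $|T_G(v)|$ by definition, so nothing is left to prove for them. The substance of the theorem lies in the three remaining features---average neighbor degree, edges in $I_G(v)$, and edges leaving $I_G(v)$---and for all three the backbone of the argument is a single counting identity for three-paths ending at $v$. A three-path $v - u - w$ with $v$ as an endpoint is specified by choosing a neighbor $u \in N_G(v)$ and then a neighbor $w \in N_G(u) \setminus \{v\}$, and these choices produce distinct paths, so
\begin{equation*}
|P_G(v)| = \sum_{u \in N_G(v)} (d^u_G - 1) = \Big(\sum_{u \in N_G(v)} d^u_G\Big) - d^v_G.
\end{equation*}
Rearranging gives the average neighbor degree $\frac{1}{d^v_G}\sum_{u \in N_G(v)} d^u_G = 1 + |P_G(v)|/d^v_G$, matching the table entry.

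For the egonet edge count I would partition $E_{I_G(v)}$ into the $d^v_G$ edges incident to $v$ and the edges joining two neighbors of $v$. Each edge $(u,w)$ with $u,w \in N_G(v)$ closes a triangle $\{v,u,w\}$ through $v$, and this is a bijection between neighbor--neighbor edges and triangles containing $v$; hence there are exactly $|T_G(v)|$ such edges, yielding $|E_{I_G(v)}| = d^v_G + |T_G(v)|$.

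For the edges leaving $I_G(v)$, the key step is to classify each three-path $v - u - w$ counted by $|P_G(v)|$ according to the location of its far endpoint $w$. If $w \in N_G(v)$ the path closes into a triangle on $\{v,u,w\}$; since each such triangle is traversed by exactly the two paths $v-u-w$ and $v-w-u$, these ``closing'' paths account for $2|T_G(v)|$ of the total. Otherwise $w \notin \{v\} \cup N_G(v)$, the edge $(u,w)$ leaves the egonet, and I would argue that sending each such ``open'' path to the edge $(u,w)$ is a bijection onto the edges leaving $I_G(v)$. Subtracting then gives the number of leaving edges as $|P_G(v)| - 2|T_G(v)|$.

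The step I expect to require the most care is this final classification: I must confirm that the map from open three-paths to leaving edges is genuinely a bijection. Injectivity follows because the inside endpoint $u \in N_G(v)$ and outside endpoint $w$ are each recoverable from the edge; surjectivity requires observing that every leaving edge has exactly one endpoint in $\{v\} \cup N_G(v)$, and that this endpoint cannot be $v$ itself (all of $v$'s edges stay inside the egonet), so it is some $u \in N_G(v)$ giving a unique path $v-u-w$. The triangle factor of $2$ is the other easy place to slip, so I would state the two-to-one correspondence between closing paths and triangles explicitly rather than leaving it implicit.
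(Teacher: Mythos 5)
Your proof is correct and follows essentially the same route as the paper's: the same identity $|P_G(v)| = \sum_{u \in N_G(v)} d^u_G - d^v_G$ for the average neighbor degree, the same partition of $E_{I_G(v)}$ into edges at $v$ and neighbor--neighbor edges bijecting with triangles, and the same classification of three-paths from $v$ into those closing a triangle (counted twice per triangle) and those whose second edge leaves the egonet. Your explicit verification that the map from open three-paths to leaving edges is a bijection is slightly more careful than the paper's treatment, but it is the same argument.
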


\begin{proof}
Observe that the degree and clustering coefficient of a vertex is already written in terms of $d^v_G$ and $|T_G(v)|$. We will now show that the remaining three features can also be formulated in terms of $d^v_G$, $|T_G(v)|$, and $|P_G(v)|$.

\textit{Average Degree of Neighbors:} 
Consider a vertex $u \in N_G(v)$. For each $w \in N_G(v) \setminus \{v\} $, $w$ must end at a the three-path $(v,u,w)$. The only remaining edge for each $u \in N_G(v)$ is $v$ itself. Note that when summing over the degrees of all vertices in $N_G(v)$, $v$ appears once in each degree, and thereby $d^v_G$ times in total. Hence $|P_G(v)| + d^v_G =  \sum_{u \in N_G(v)} d^u_G$ and the average can be expressed as $\frac{1}{d^v_G}\sum_{u \in N_G(v)} d^u_G = 1 + \frac{|P_G(v)|}{d^v_G}$

\textit{Edges in $I_G(v)$:} 
Let $X \subseteq E_{I_G(v)}$ be the set of all edges in $E_{I_G(v)}$ that are incident on $v$, and $\overline{X} = E_{I_G(v)} \setminus X$ be the complement of $X$. Clearly, $E_{I_G(v)} = X \cup \overline{X}$ and $X \cap \overline{X} = \emptyset$. Thus, $|E_{I_G(v)}| = |X| + |\overline{X}|$. By definition, there are exactly $d^v_G$ edges incident on $v$, and each of them belongs to $E_{I_G(v)}$. Thus, $|X| = d^v_G$.

Now, consider any edge $(u,w) \in \overline{X}$. Recall that $u,w \not= v$, so, by the definition of $I_G(v)$, $u,w \in N_G(v)$. Thus, $(u,w)$ must be part of the triangle $\{(u,v), (v,w), (u,w)\}$. Since each edge in $\overline{X}$ forms a triangle incident on $v$, we have that $|\overline{X}| = |T_G(v)|$. Hence, $\left|E_{I_G(v)}\right| = |X| + |\overline{X}| = d^v_G + |T_G(v)|$.

\textit{Edges leaving $I_G(v)$:} 
Consider a three-path $(v,u)(u,w)$. Clearly, if $w \not\in N_G(v)$, $(u,w)$ must be an edge leaving $I_G(v)$. Thereby, the number of edges leaving $I_G(v)$ must be all three-paths starting at $v$ and ending at a vertex not in $N_G(v)$. Thus, we must account for all three-paths starting at $v$ that lie in $I_G(v)$. Clearly, if $w \in E_{I_G(v)}$, then the following three-paths are formed: $(v,u)(u,w)$, and $(v,w),(v,u)$. Hence, each triangle in $T_G(v)$ contributes twice to the number of paths in $P_G(v)$, and we can formulate the feature as $|P_G(v)| - 2|T_G(v)|$.
\end{proof}

Observe that each feature is a linear combination of our estimated variables, $|T_G(v)|$ and $|P_G(v)|$. Thus, we note that the features computed for each vertex are equal to the true value on expectation, as per Theorem~\ref{thm:unbiased} and the linearity of expectation.

\subsubsection{Time and Space Complexity} 
We assume the same adjacency list structure described in Section~\ref{sec:gabe}. Let $G'$ be the sampled graph. 
Three arrays of length $|V_G|$ are used to store the values of $d^v_G$, $P_G(v)$, and $|T_G(v)|$ for all $v \in V_G$. The degree of each vertex takes $O(1)$ time to update.
Let $e_t = (u_t, v_t)$ be the edge arriving at time $t$. Due to the sorted nature of our adjacency list, triangles incident on $e_t$ can be found by computing the intersection of $N_G(u_t)$ and $N_G(v_t)$ in $O(|N_G(u_t)| + |N_G(v_t)|)$ time. Counting three-paths also takes $O(|N_G(u_t)| + |N_G(v_t)|)$ time, as one pass over each neighborhood is required. Thus, the time taken to process each edge is $O(|N_G(u_t)| + |N_G(v_t)|) = O(b)$, and processing all edges takes $O(b|E_G|)$ time. After processing the entire edge stream, computing the moments over all arrays takes $O(|V_G|)$ time. Thus, the total runtime is $O(b|E_G| + |V_G|)$. The space complexity is $O(b + |V_G|)$, on account of storing $b$ edges in the adjacency list and a constant number of arrays of length $|V_G|$.

\subsection{SANTA: Spectral Attributes for Networks via Taylor Approximation}
\label{sec:santa}

For a graph $G = (V_G, E_G)$, let $A_G$ be its adjacency matrix. Let $D_G$ be a diagonal matrix where $D_G(i,i)$ is the degree of vertex $v_i \in V_G$. Let $\mathcal{L}_G = I_G - D^{-\frac{1}{2}}AD^{-\frac{1}{2}} $ be the normalized Laplacian of $G$ (see Table~\ref{tbl_notations_3} for notation description), where $I_G$ is the $|V_G| \times |V_G|$ identity matrix. Let $\lambda_k$ be the $k^{th}$ eigenvalue of $\mathcal{L}_G$, and $\Lambda_G = (\lambda_1, \lambda_2, \ldots, \lambda_{|V_G|})$ refers to the eigenspectrum of $\mathcal{L}_G$. In~\cite{tsitsulin2018netlsd}, Tsitsulin et al. present \textsc{NetLSD}: a descriptor based on the spectral properties of a graph. \textsc{NetLSD}'s descriptors are based on functions of the form  $\psi_j: \Lambda_G \rightarrow \mathbb{R} $ which map $\mathcal{L}_G$'s eigenspectrum to a real number, based on a parameter $j$. For a set of parameters $\{j_1, j_2, \ldots, j_m\}$, the vectors take the following form:
\begin{equation*}
\begin{bmatrix}
\psi_{j_1}(\Lambda_G) ~&
\psi_{j_2}(\Lambda_G) ~& 
\cdots ~& 
\psi_{j_{m-1}}(\Lambda_G) ~& 
\psi_{j_m}(\Lambda_G) 
\end{bmatrix}^\intercal
\end{equation*}

\begin{table}[h!]
\centering
\begin{tabular}{cl}
\toprule
\textbf{Notation} & \textbf{Description} \\ 
\midrule
$I_G$ & $|V_G| \times |V_G|$ identity matrix \\ [.04in]
$\Lambda_G$ & List of eigenvalues \\ [.04in]
$\mcl$ & Normalized Laplacian of $G$ \\ [.04in]
$\psi_j$ & Function that maps $\Lambda_G$ to a real number based on $j$ \\ [.04in]
$j$ & Parameter for $\psi_j$ \\  [.04in]
$n$& Exponent of $\mcl$
\\ \bottomrule
\end{tabular}
\caption{Notation table for common terms used throughout Section~\ref{sec:santa}.}
\label{tbl_notations_3}
\end{table}

The authors of \textsc{NetLSD} define six different functions based on two ``kernels'' and three normalization factors based on the eigenspectrums of complete graphs and their complements on $|V_G|$ vertices. Each function is of the form:
\begin{equation*}
    \psi_j(\Lambda_G) = \alpha \times \text{Re}\bigg( \sum_{\lambda_k \in \Lambda_G}   e^{-j\beta\lambda_k} \bigg)
\end{equation*} 

where $\alpha$ is a normalization factor dependent on $|V_G|$ and $j$, and $\beta \in \{1, i\}$. Each function has been mentioned in Table~\ref{tab:netlsd}. Note that $\beta = 1$ for Heat kernel, and $\beta = i = \sqrt{-1}$ for the Wave kernel. For small values of $j$, Tsitsulin et al. suggest approximating the functions using the Taylor expansion:
\begin{equation*}
    \alpha \sum_{k = 0}^{\infty} \frac{\trace((-j\beta \mcl)^k)}{k!} = \alpha \trace(I_G) - \alpha j\beta~ \trace(\mcl)  + \alpha\frac{( j\beta)^2}{2}\trace(\mcl^2) + \cdots
\end{equation*}

The authors of \textsc{NetLSD} discuss approximating $\psi_j(\Lambda_G)$ for small $j$ using three Taylor terms. By enumerating over subgraphs, we propose using the first five terms of the Taylor expansion to construct a descriptor similar to \textsc{NetLSD}'s for small values of $j$:

\begin{equation*}
\begin{multlined}
    \psi_j(\Lambda_G) = \alpha \text{Re} \bigg( \trace(I_G) -  j\beta~\trace(\mcl)  + \frac{( j\beta)^2}{2}\trace(\mcl^2)
     - \frac{( j\beta)^3}{6}\trace(\mcl^3) + \frac{( j\beta)^4}{24}\trace(\mcl^4) \vphantom{\frac{( j\beta)^2}{2}\trace(\mcl^2)} \bigg)
\end{multlined}
\end{equation*}

In the remainder of this section, we discuss how subgraph enumeration can be used to compute $\trace(\mcl^n)$ for $n \leq 4$ and a two-pass algorithm that can approximate \textsc{NetLSD} using the estimation scheme discussed previously.

\subsubsection{Computing the Trace via Subgraph Enumeration} 

For an adjacency matrix $A_G$, $A_G^n(u,v)$ is the number of walks of length $n$ from $u$ to $v$. 
The $n^{th}$ product of the Laplacian behaves similarly with the added facts that: \textit{(1)} we must also consider the self-loops induced on each vertex due to the 1's in $\mcl$'s diagonal, and \textit{(2)} the value added to $\mcl^n(u,v)$ by a walk will be a product of the ``weights'' of each of its edges, as each entry in the Laplacian corresponds to the following:

\begin{equation*}
    \mcl(u, v) = 
\begin{cases}
1, & \text{if $u = v$ and $d_G^u > 0 $}\\
- \dfrac{1}{\sqrt{d_G^u d_G^v}}  & \text{if $(u,v) \in E_G$}\\
0  & \text{otherwise}
\end{cases}
\end{equation*}

Using these facts, one can assert the following:

\begin{theorem}
\label{thm:netlsd}
The value of $\trace\left(\mcl^n\right)$ can be computed for $n \in \{2,3,4\}$ by enumerating over all subgraphs on at most four vertices.
\end{theorem}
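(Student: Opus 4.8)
The plan is to expand the trace as a sum over closed walks and then group these walks by the small subgraph they trace out. Writing $\trace(\mcl^n) = \sum_{u \in V_G} \mcl^n(u,u)$, the standard interpretation of matrix powers gives
\[
\mcl^n(u,u) = \sum_{(w_0, w_1, \ldots, w_n)} \prod_{i=0}^{n-1} \mcl(w_i, w_{i+1}),
\]
where the sum ranges over all closed walks $u = w_0, w_1, \ldots, w_n = u$ of length $n$ in which consecutive vertices are either equal (using the diagonal $1$-entries) or adjacent in $G$ (using the off-diagonal $-1/\sqrt{d^{w_i}_G d^{w_{i+1}}_G}$ entries). Thus each walk contributes a monomial in the reciprocal degrees of the vertices it visits, with a sign determined by the number of genuine edge-steps it takes.

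The key structural observation is that a closed walk of length $n$ visits at most $n$ distinct vertices, since $w_n = w_0$. Hence for $n \in \{2,3,4\}$ every contributing walk is supported on a connected subgraph of $G$ with at most four vertices, that is, on a copy of some connected $F \in \mathcal{F}$. I would therefore partition the closed walks according to the isomorphism type $F$ of the subgraph they induce (the vertices actually visited together with the edges actually traversed), and then according to which copy in $H^F_G$ that subgraph is. For each fixed copy of $F$, the aggregate contribution of all length-$n$ closed walks supported exactly on it is a fixed rational function of the degrees $d^v_G$ of the vertices of that copy: every walk's weight is a product of diagonal $1$'s (self-loops, which cost a step but contribute factor $1$) and edge-weights $-1/\sqrt{d^u_G d^v_G}$, so summing over the finitely many walk patterns living on a given copy yields a degree-dependent coefficient multiplying that copy's count.

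Concretely, for each $n$ I would enumerate the finite list of closed-walk patterns of length $n$ and record which connected subgraph on $\le 4$ vertices each pattern traces. For $n=2$ the only patterns are a double self-loop (tracing a single vertex) and a back-and-forth across an edge; for $n=3$ one additionally obtains triangles traversed once; and for $n=4$ one obtains $4$-cycles, cherries (two-paths) traversed with a reversal, triangles padded by one self-loop, and edges traversed with self-loops or repeated back-and-forth. Summing the degree-monomials of these patterns over the corresponding members of $H^{F}_G$ for each $F$ expresses $\trace(\mcl^n)$ as a sum, over subgraphs on at most four vertices, of terms computable once the vertex degrees and the subgraph copies are in hand. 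Since those counts and degrees are exactly the quantities produced by the enumeration of Section~\ref{sec:gabe}, this establishes the claim.

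The part I expect to be most delicate is the $n=4$ bookkeeping: one must correctly enumerate all closed walks of length four, assign each to the precise subgraph it induces (taking care that walks revisiting vertices collapse onto lower-order subgraphs rather than onto a full four-vertex subgraph), and verify that the self-loop paddings and edge reversals are counted with the right multiplicities and signs so that the degree-weighted per-copy coefficients come out correctly.
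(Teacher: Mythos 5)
Your proposal is correct and follows essentially the same route as the paper: both expand $\trace\left(\mcl^n\right)$ as a sum over closed walks of length $n$ (allowing self-loop steps from the diagonal $1$'s), observe that each such walk is supported on a connected subgraph with at most $n \leq 4$ vertices, and then group the walks by the subgraph copy they trace so that each copy contributes a degree-dependent term times the number of walk patterns on it --- exactly the bookkeeping the paper records in its tables of subgraphs, walks, and terms. Your closing caveat about the $n=4$ case is well placed, as that enumeration (twelve self-loop-padded walks plus two pure back-and-forth walks on an edge, the reversed cherry walks, the padded triangle walks, and the $4$-cycle walks) is precisely where the paper expends its effort.
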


\begin{table}[h!]
\centering
\begin{tabular}{cccc}
\toprule
\textbf{Kernel} & \multicolumn{3}{c}{\textbf{Normalization}} \\ 
\midrule
 & None        & Empty       & Complete       \\ 
 [.1in]
 $\text{Heat}$ &  $\sum e^{-j\lambda}$ & $\frac{1}{|V_G|}\sum e^{-j\lambda}$ & $\dfrac{\sum e^{-j\lambda}}{1 + (|V_G|-1)e^{-j}}$         \\
 [.2in]
 $\text{Wave}$ &      $\text{Re}\left(\sum e^{-ij\lambda}\right)$       &       $\frac{1}{|V_G|}\text{Re}\left(\sum e^{-ij\lambda}\right)$      &          $\dfrac{\text{Re}\left(\sum e^{-ij\lambda}\right)}{1 + (|V_G|-1)\cos(j)}$ \\
 [.15in] 
 \bottomrule
\end{tabular}
\caption{Each cell represents a function of the form $\psi_j$ proposed by the authors of \textsc{NetLSD}. All summations are taken over each eigenvalue $\lambda \in \Lambda_G$.}
\label{tab:netlsd}
\end{table}

\begin{proof}
Clearly, $\mcl^n(u,u)$ is equal to the sum of the weights of all walks with $\leq n$ edges from a vertex $u$ to itself. Thus, it is sufficient to enumerate all such walks and sum the weight of each walk. We do this by enumerating all relevant subgraphs, then adding a term that accounts for the weight of each walk in the subgraph and the number of walks within it. The largest subgraph induced by a walk of length $n$ from a vertex to itself is a $n$-cycle, which has $n$ vertices.
The relevant subgraphs for each $n \in \{2,3,4\}$ are shown in Tables \ref{tab:terms1}, \ref{tab:terms2}, and \ref{tab:terms3}. Observe that each coefficient of each term is determined by the number of walks of length $n$ possible on the corresponding subgraph, and each term is determined by the product of the weights of the edges as specified by the definition of the Laplacian. 
\end{proof}

\begin{table}[h!]
    \centering
    \begin{tabular}{ccc|ccc}
    \toprule
         \textbf{Subgraph} & \textbf{Walks} & \textbf{Term} & \textbf{Subgraph} & \textbf{Walks} & \textbf{Term} \\ \midrule
         \parbox[c]{1cm}{\centering\includegraphics[width=1cm]{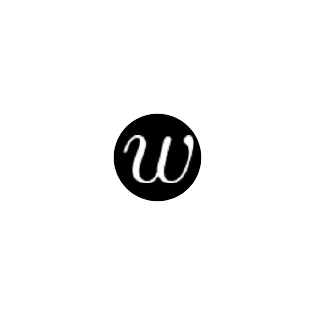}}               & $www$        & 1             & \parbox[c]{1cm}{\centering\includegraphics[width=1cm]{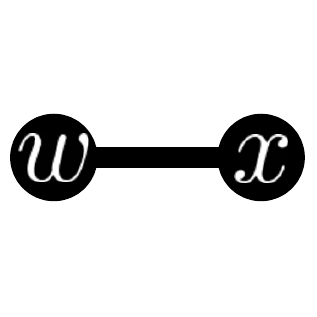}}               & $\begin{matrix}
    wxw & xwx
    \end{matrix} $        & $\dfrac{2}{d_G^w d_G^x}$  \\[0.2cm] \bottomrule
    \end{tabular}
    \caption{Subgraphs and terms relevant to computing $\trace\left(\mcl^2\right)$.}
    \label{tab:terms1}
\end{table}

\begin{table}[h!]
    \centering
    \resizebox{\textwidth}{!}{%
    \begin{tabular}{ccc|ccc}
    \toprule
         \textbf{Subgraph} & \textbf{Walks} & \textbf{Term} & \textbf{Subgraph} & \textbf{Walks} & \textbf{Term} \\ \midrule
         \parbox[c]{1cm}{\centering\includegraphics[width=1cm]{Figures/tame_g1.pdf}}               & $wwww$        & 1             &
    \parbox[c]{1cm}{\centering\includegraphics[width=1cm]{Figures/tame_g2.pdf}}               & $\begin{matrix}
    wwxw & wxww & wxxw \\
    xxwx & xwxx & xwwx
    \end{matrix} $        & $\dfrac{6}{d_G^w d_G^x}$  \\[0.5cm] \parbox[c]{1cm}{\centering\includegraphics[width=1cm]{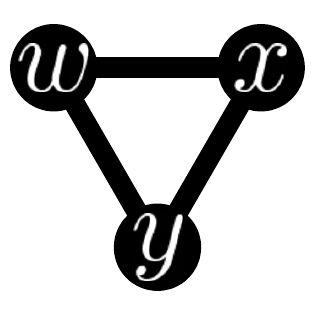}}           & $ \begin{matrix}
    wxyw & xywx & ywxy \\
    wyxw & xwyx & yxwy
    \end{matrix} $        & $-\dfrac{6}{d_G^w d_G^x d_G^y}$ & & &     \\[0.5cm] 
    \bottomrule
    \end{tabular}%
    }
    \caption{Subgraphs and terms relevant to computing $\trace\big(\mcl^3\big)$.}
    \label{tab:terms2}
\end{table}

\begin{table}[h!] 
\centering
\resizebox{\textwidth}{!}{%
\begin{tabular}{ccc|ccc}
\toprule
\textbf{Subgraph} & \textbf{Walks} & \textbf{Term} & \textbf{Subgraph} & \textbf{Walks} & \textbf{Term} \\ \midrule
\parbox[c]{1cm}{\centering\includegraphics[width=1cm]{Figures/tame_g1.pdf}}               & $wwwww$        & 1             &
\parbox[c]{1cm}{\centering\includegraphics[width=1cm]{Figures/tame_g2.pdf}}               & $\begin{matrix}
wwwxw & wwxxw & wxwww \\
wxxxw & wxxww & wwxww \\
xxxwx & xxwwx & xwxxx \\
xwwwx & xwwxx & xxwxx
\end{matrix} $       & $\frac{12}{d_G^w d_G^x}$           \\
[0.5cm] 
\parbox[c]{1cm}{\centering\includegraphics[width=1cm]{Figures/tame_g2.pdf}}               & $\begin{matrix}
wxwxw & xwxwx
\end{matrix} $        & $\frac{2}{(d_G^w d_G^x)^2}$           & 
\parbox[c]{1cm}{\centering\includegraphics[width=1cm]{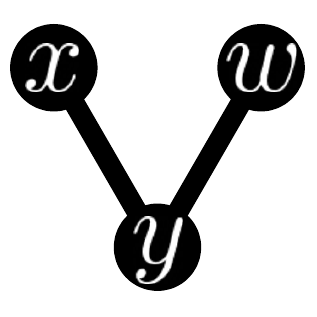}}                 & $\begin{matrix}
wyxyw & xywyx & yxywy & ywyxy
\end{matrix} $        & $\frac{4}{d_G^w d_G^y d_G^y d_G^x}$          \\
[0.5cm] 
\parbox[c]{1cm}{\vspace{0.5cm}\centering\includegraphics[width=1cm]{Figures/tame_g3.pdf}}           & $ \begin{matrix}
wxyww & wyyxw & xywxx & yxxwy \\ 
wwxyw & wyxxw & xxywx & yxwwy \\ 
wxxyw & xwyxx & xyywx & ywxyy \\ 
wxyyw & xxwyx & xywwx & yywxy \\ 
wyxww & xwwyx & yxwyy & ywwxy \\ 
wwyxw & xwyyx & yyxwy & ywxxy
\end{matrix} $        & $-\frac{24}{d_G^w d_G^x d_G^y}$ 
&
\parbox[c]{1cm}{\centering\includegraphics[width=1cm]{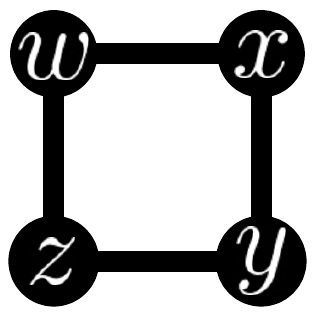}}          & $ \begin{matrix}
wxyzw & yzwxy & wzyxw & yxwzy \\
xyzwx & zwxyz & xwzyx & zyxwz 
\end{matrix} $        & $\frac{8}{d_G^w d_G^x d_G^y d_G^z}$    \\
[0.15in] 
\bottomrule
\end{tabular}%
}
\caption{All subgraphs and terms relevant to computing $\trace\left(\mcl^4\right)$.}
\label{tab:terms3}
\end{table}

\subsubsection{Computing the Descriptor on an Edge Stream}

We propose a two-pass algorithm to compute the descriptor.
In the algorithm's first pass, each vertex's degree is stored. In the second pass, the traces are computed; subgraphs are enumerated on the stream exactly as in the previous sections. When incrementing our count, the term to be added is multiplied by the probability of encountering it in the stream. We now show the validity of this method:

\begin{theorem}
\label{thm:netlsd2}
The approach proposed to approximate $\trace\left(\mcl^n\right)$ provides unbiased estimates.
\end{theorem}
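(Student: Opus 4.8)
The plan is to reduce the claim to Theorem~\ref{thm:unbiased} by recognizing $\trace(\mcl^n)$ as a \emph{weighted} sum of connected subgraph counts, where the weights are deterministic once the exact degrees are known. By Theorem~\ref{thm:netlsd} and the walk-enumeration tables, every closed walk of length $n$ from a vertex to itself is supported on a subgraph $F$ on at most four vertices and contributes a term equal to the product of the relevant edge weights $-1/\sqrt{d^u_G d^v_G}$ (together with the $+1$ diagonal contributions) times the number of such walks on $F$. Hence I would first write, for each $n \in \{2,3,4\}$,
\begin{equation*}
\trace(\mcl^n) = \sum_{F} \sum_{h \in H^F_G} \mathrm{term}_n(h),
\end{equation*}
where the outer sum ranges over the subgraph types appearing in Tables~\ref{tab:terms1}--\ref{tab:terms3} and $\mathrm{term}_n(h)$ is the fixed contribution of the instance $h$. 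The structural observation I would emphasize first is that $\mathrm{term}_n(h)$ depends only on the \emph{true} degrees $d^v_G$ of the vertices incident to $h$; since the first pass records every vertex's exact degree, each $\mathrm{term}_n(h)$ is a known constant and carries no sampling randomness.

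Given this decomposition, the second pass is exactly the estimator of Section~\ref{sec:est} applied with a non-unit increment. For each instance $h$ of a subgraph type $F$ with $|E_F|\ge 1$, I would define $Z_h = \mathrm{term}_n(h)/p^F_t$ if $h$ is detected when its last edge $e_t$ arrives (the remaining $|E_F|-1$ edges being present in $\widetilde{E_G}$), and $Z_h = 0$ otherwise. Mirroring Theorem~\ref{thm:unbiased}, detection occurs with probability $p^F_t$, so $\mathbb{E}[Z_h] = \mathrm{term}_n(h)\cdot(1/p^F_t)\cdot p^F_t = \mathrm{term}_n(h)$. Because $e_t$ triggers only those subgraphs it belongs to, each $h$ is detected at a single, well-defined time-step, so no instance is counted twice; the single-vertex walk contributing the constant term (i.e.\ $\trace(I_G)$) needs no sampling and is added deterministically. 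Summing over all instances and subgraph types and invoking linearity of expectation then yields
\begin{equation*}
\mathbb{E}\Big[\widehat{\trace(\mcl^n)}\Big] = \sum_{F}\sum_{h \in H^F_G} \mathbb{E}[Z_h] = \sum_{F}\sum_{h \in H^F_G}\mathrm{term}_n(h) = \trace(\mcl^n).
\end{equation*}

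The routine part is verifying that the walk counts and edge-weight products in the tables reconstruct $\trace(\mcl^n)$, which is essentially the content of Theorem~\ref{thm:netlsd} and may be assumed. The main obstacle, and the point I would argue most carefully, is the interplay between the two passes: the estimator is unbiased precisely because the degree-dependent weights are computed from the exact degrees gathered in the first pass rather than from the sampled subgraph, so that $\mathrm{term}_n(h)$ can be treated as a constant and pulled out of the expectation. I would also check the boundary behavior of $p^F_t$ for the low-edge subgraphs (for a single edge, $|E_F|=1$ and $p^F_t=1$, so its contribution is exact), confirming that every subgraph type in the decomposition is handled consistently by the same scaling rule.
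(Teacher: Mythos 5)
Your proposal matches the paper's own proof essentially step for step: both decompose $\trace\left(\mcl^n\right)$ into a sum of deterministic per-instance terms $\delta_h$, define an indicator-style random variable equal to $\delta_h/p_t$ upon detection of $h$ at its last edge, and conclude by linearity of expectation exactly as in Theorem~\ref{thm:unbiased}. Your added remark that the first pass makes the degree-dependent weights exact constants (so they can be pulled out of the expectation) is a point the paper leaves implicit, but it does not change the argument.
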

\begin{proof}

We present a proof similar to the one presented in Theorem~\ref{thm:unbiased}.
Let $\tau_n$ be the estimate of $\trace\left(\mcl^n\right)$ provided by the algorithm described above.
Let $H^n_G$ be the set of subgraphs that are observed to increment $\tau_n$. For each $h \in H^n_G$, let $\delta_h$ be the term added to $\tau_n$ when $h$ is discovered in the stream. Recall from our prior discussion that $\trace\left(\mcl^n\right)$ can be defined as follows:
\begin{equation*}
    \trace\left(\mcl^n\right) = \sum_{h \in H^n_G} \delta_h 
\end{equation*}
Let $X_h$ be a random variable such that $X_h = \delta_h \times \frac{1}{p_t}$ if $h$ is discovered at the arrival of its last edge, and 0 otherwise, where $p_t$ is the probability of detecting $h$. Clearly, $\mathbb{E}\left[X_h \right] = (\delta_h/p_t) \times p_t = \delta_h$. We now analyze the expectation of $\tau_n$:
\begin{equation*}
    \mathbb{E}\left[\tau_n \right] = \mathbb{E}\bigg[\sum_{h \in H^n_G} X_h\bigg] = \sum_{h \in H^n_G} \mathbb{E}\left[X_h\right] = \sum_{h \in H^n_G} \delta_h = \trace\left(\mcl^n\right)
\end{equation*} 
\end{proof}

\subsubsection{Time and Space Complexity.} 
The computation performed is similar to the one in Section~\ref{sec:gabe}, with the extra step of storing the degrees in the first pass, which takes $O(|E_G|)$ time. Computing the descriptors takes $O(1)$ time. Thus, the time complexity is $O(b\log b |E_G|)$. Likewise, the space complexity is $O(b + |V_G|)$.

\section{Experimental Setup}\label{sec_experimental_evaluation}

This section outlines the experimental setup, including the dataset statistics, hyperparameter values, and data preprocessing. We also introduce state-of-the-art methods for comparing results with our proposed model. We show the visual representation of the proposed and the existing descriptor by converting them into 2-dimensional representations.
All experiments, except the ones on Malnet-TB, are performed on a single machine with 48 processors (2.50GHz Intel Xeon E5-2680v3) and 125 GB of memory.
The experiments on Malnet-TB are run on a single machine with 16 processors (3.70GHz Intel Xeon W-2145) and 32 GB of memory.
All algorithms were implemented\footnote{\url{https://git.io/JEQmI}} in C++ using an MPICHv3.2 backend. The code is built upon the Tri-Fly code, provided by Shin et al.~\cite{shin2018tri}. For each experiment, 25 processors simulate 1 master machine and 24 worker machines, and each embedding is computed once. 

\subsection{Hyperparameters}

Based on empirical observations, we use Canberra distance, $\left(d(\Vec{x},\Vec{y}) := \sum_{i = 1}^{d} \nicefrac{|\Vec{x}_i - \Vec{y}_i|}{|\Vec{x}_i|+|\Vec{y}_i|}\right)$, as the distance metric to measure approximation error for \textsc{gabe} and \textsc{maeve}. While $\ell_2$-distance metric is used to evaluate \textsc{santa}. We note that these error metrics are inline with those used in the literature, (c.f. ~\cite{berlingerio2013network,tsitsulin2018netlsd}).

As observed later, one achieves reasonable estimates for \textsc{santa} with $j \leq 1$. Thus, as in~\cite{tsitsulin2018netlsd}, we use 60 evenly-spaced values on the logarithmic scale within the range $[0.001,1]$ to construct the descriptors for \textsc{santa}. Note that when comparing \textsc{santa} to its actual values, the values produced by \textsc{NetLSD} are used. Thereby, the approximation error includes both the error introduced via subgraph estimation and the error via Taylor approximation.

\subsection{Datasets Statistics}

Our proposed model, along with the baselines and SOTA methods, are evaluated on various publicly available graph datasets, chosen primarily to showcase the efficacy of our model on large graphs.
Eight graph classification datasets were selected from the TUDataset~\cite{morris2020TUDataset} repository: DD~\cite{shervashidze2011weisfeiler}, CLB, RDT2, RDT5, and RDT12~\cite{yanardag2015deep}, OHSU~\cite{QiangDGLGZSGL21}, GHUB~\cite{karateclub}, FMM\footnote{\url{http://www.first-mm.eu/data.html}}~\cite{firstmmDBcite}. These datasets were selected due to the large size of the graphs within them relative to other datasets. The details for these datasets are provided in Table~\ref{tab:benchmark}.
Similarly, seven massive networks were selected from KONECT~\cite{konect} (i.e., Florida, USA, CiteSeer, Patent, Flickr, Stanford, and UK) to showcase the scalability of our models. The details of these graphs are provided in Table~\ref{tab:massive}.
\begin{enumerate}
    \item REDDIT graphs\footnote{\url{https://dynamics.cs.washington.edu/data.html}} were randomly sampled to construct a dataset to evaluate the approximation quality of our proposed methodology. Each graph represents a subreddit, wherein a vertex is a user within that subreddit, and an edge represents two users who have interacted within the subreddit.  RDT2, RDT5, and RDT12 are datasets of REDDIT graphs, as described earlier.
    \item DD is a bioinformatics dataset. Graphs in DD represent protein structures. A protein is represented as a graph, where the vertex represents amino acids, and there will be an edge between two vertices if they are connected less than 6 Angstroms apart. 
    \item OHSU is a bioinformatics dataset. OHSU graphs represent brain networks, wherein each vertex represents a region of the brain, and two regions are linked if they are correlated. 
    \item Graphs in CLB represent networks of researchers (each node is a single researcher) where edges represent that two researchers have collaborated. 
    \item GHUB graphs are social networks of developers (each developer is a node) who ``starred'' popular machine learning and web development repositories on Github to make the edges.
    \item Graphs in FMM represent 3D point clouds of household objects, wherein each vertex represents an object, and two objects share an edge when there are nearby.
    \item The Malnet dataset was used to test the efficacy of our models on large-scale classification tasks~\cite{freitas2021large}.
    The dataset aims to distinguish malware based on ``function call graphs.'' Thus, each graph represents a program, wherein each vertex is a function within the program, and each edge represents an ``inter-procedural'' call.
    In lieu of using the entire dataset, we used two subsets: Trojan and Benign. We consider the binary classification task of distinguishing between these two sets. 
    \item Florida (FO) and USA (US) datasets in KONECT are the road network graphs, where each edge represents a road, and each vertex represents the intersection of two or more roads.
    \item CiteSeer (CS) and Patent (PT) datasets in KONECT are the citation networks, where vertices represent documents and connected vertices represent documents that reference each other.
    \item Flicker (FL) dataset in KONECT is the friendship network, where vertices represent users on social networks, and two vertices are connected if the users are ``friends".
    \item Stanford (SF) and UK 2002 (U2) datasets in KONECT are the hyperlink networks, where vertices represent webpages, and connected vertices represent webpages that link to each other.
\end{enumerate}

\begin{table}[h!]
\centering
\begin{tabular}{lccccc}
\toprule
\textbf{Dataset}      & \textbf{Graphs} & \textbf{Classes} & $\mathbf{\max|V_G|}$ & $\mathbf{\max|E_G|}$ & \textbf{Avg. Deg.} \\ 
\midrule
FMM       & $41$     & $11$      & $5037$    & $21774$ & 4.50  \\ [.04in]
OHSU             & $79$     & $2$       & $171$     & $1646$ & 4.33  \\  [.04in]
DD             & $1178$   & $2$       & $5748$     & $14267$ & 4.98 \\ [.04in]
RDT2    & $2000$   & $2$       & $3782$    & $4071$ & 2.34 \\ [.04in]
RDT5  & $4999$   & $5$       & $3648$    & $4783$ & 2.25 \\ [.04in]
CLB           & $5000$   & $3$       & $492$     & $40120$ & 37.39 \\ [.04in]
RDT12 & $11929$  & $11$      & $3782$    & $5171$  & 2.28 \\  [.04in]
GHUB           & $12725$  & $2$       & $957$     & $9336$  & 3.20 \\ [.04in]
 Malnet-TB           & $258373$  &  $2$     &   $551873$   &  $1639647$  &  4.15  \\   [.04in]
\bottomrule
\end{tabular}
\caption{Descriptions of each dataset used in our work for graph classification. For each dataset, we list the number of graphs, the number of classes, the largest order and size of a graph within the dataset, and the average degree across all graphs in the dataset.}
\label{tab:benchmark}
\end{table}

\begin{table}[h!]
\centering
\begin{tabular}{lcccp{5.9cm}}
\toprule
\textbf{Graph} & $\mathbf{|V_G|}$     & $\mathbf{|E_G|}$     & \textbf{Type}  & \textbf{Description} \\ 
\midrule
Florida (FO)        & 1070376              & 1343951              & \multirow{2}{*}{Road}      & \multirow{2}{=}{\small Vertices are the intersections of two or more roads, edges represent roads} \\
USA (US)       & 23947347             & 28854312             &                            & \\[0.08in]
CiteSeer (CS)       & 384054               & 1736145              & \multirow{2}{*}{Citation}  & \multirow{2}{=}{\small Vertices are documents and are connected if one document references the other} \\
Patent (PT)         & 3774768              & 16518937             &                            & \\[0.08in]
\multirow{2}{*}{Flickr (FL)}         & \multirow{2}{*}{2302925}              & \multirow{2}{*}{22838276}             & \multirow{2}{*}{Friendship}                 & \multirow{2}{=}{\small Vertices represent users on social networks and are connected if the users are ``friends''}       \\[0.2in]
Stanford (SF)       & 281903               & 1992636              & \multirow{2}{*}{Hyperlink} & \multirow{2}{=}{\small Vertices represent webpages and vertices are connected if one webpage links to the other}     \\
UK 2002 (U2) & 18483186             & 261787258            &                            &                                                                                                               \\ \bottomrule
\end{tabular}
\caption{Massive networks from KONECT listed alongside the number of vertices and edges, and descriptions.}
\label{tab:massive}
\end{table}

For the preprocessing step, we convert each graph into an edge list. Duplicate edges and possible self-loops are removed from the list. If required, each vertex is relabelled to lie in the range $[0, |V_G|-1]$. Finally, the list is randomly shuffled to ensure that the input stream is unbiased. 

\subsection{Existing State-of-the-Art Descriptors}

We compare our models to the following state-of-the-art (SOTA) methods:
  
\par\smallskip\noindent{\textbf{\textsc{NetLSD}~\cite{tsitsulin2018netlsd}:}} \textsc{NetLSD} represents a graph based on the eigenspectrum of the graph's Laplacian. Euclidean distance is used to compare embeddings, as suggested by the authors in their work. We report the best accuracy for each of the six variants of \textsc{NetLSD}.
\par\smallskip\noindent{\textbf{\textsc{feather}~\cite{rozemberczki2020feather}:}} \textsc{feather}'s descriptors are aggregated over characteristic function descriptors of each node in a graph. The default hyperparameters are used to construct the descriptors. We report the best accuracy for each of the three variants proposed.
\par\smallskip\noindent{\textbf{\textsc{sf}~\cite{lara2018simple}:}} The authors of \textsc{sf} proposed a ``simple'' baseline algorithm based on the eigenspectrum of a graph's Laplacian. As suggested by the authors, the ``embedding dimension'' is set to the average number of nodes of each graph within a dataset.

\begin{remark}
Since no distance was suggested for \textsc{feather} or \textsc{sf}, we compute results on Euclidean and Canberra distances and report the best accuracy.
\end{remark}

\begin{remark}
Note that our models have no direct competitors, as no other graph classification paradigm is constructed to run under our proposed constraints. Despite this, we compare our model with the SOTA methods to show its effectiveness in terms of scalability.
\end{remark}

\subsection{Data Visualization}

To visually compare different descriptors, we plot them using $t$-distributed Stochastic Neighbour Embedding ($t$-SNE)~\cite{van2014accelerating}. Given the graph descriptors, $t$-SNE computes a $2$-dimensional representation of the feature vectors. 
Figure~\ref{fig_tsne_plots_dd} shows the $t$-SNE based visualization on DD dataset for our methods (\textsc{santa}, \textsc{gabe}, and \textsc{maeve} on $25\%$ and $50\%$ budget) and \textsc{NetLSD}. Observe that as we increase the budget, the class-wise separation of the data becomes more prominent. Moreover, \textsc{santa} shows the most similar representation of data to \textsc{NetLSD}.

\begin{figure}[h!]
  \centering
  \begin{subfigure}{.33\textwidth}
  \centering
  \includegraphics[scale = 0.17] {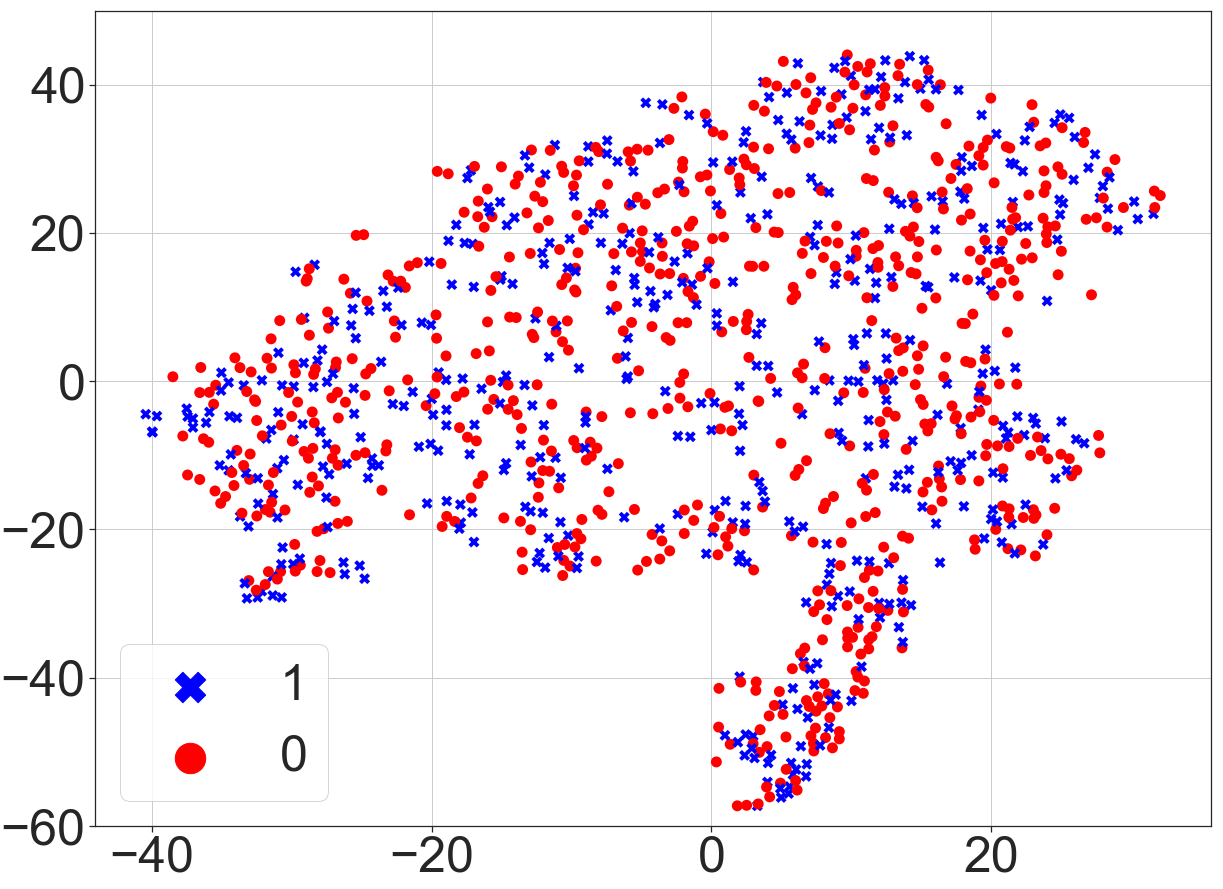}
  \caption{\textsc{maeve} 25\%}
  
  \end{subfigure}%
   \begin{subfigure}{0.33\textwidth}
  \centering
  \includegraphics[scale = 0.17] {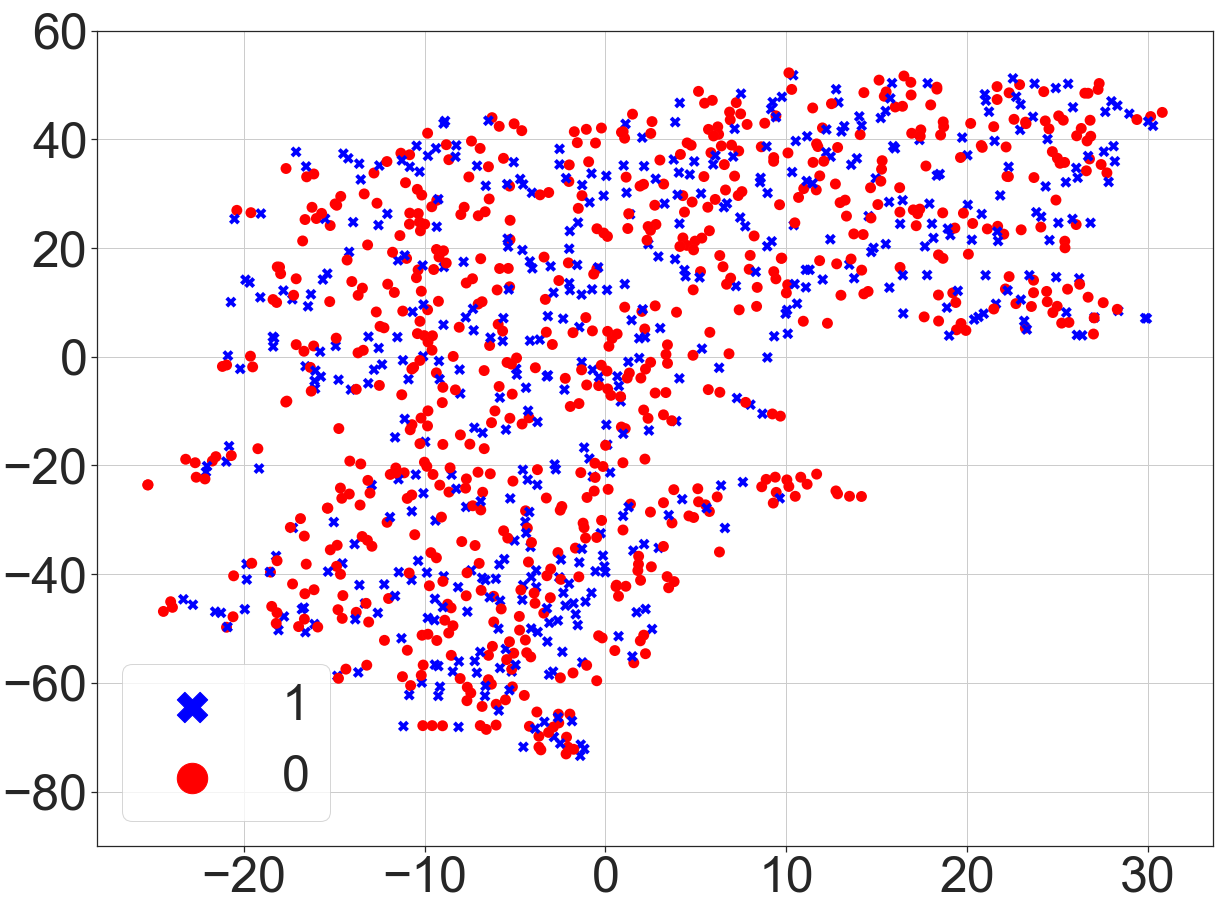}
  \caption{\textsc{maeve} 50\%}
  
  \end{subfigure}%
  \begin{subfigure}{0.33\textwidth}
  \centering
  \includegraphics[scale = 0.17] {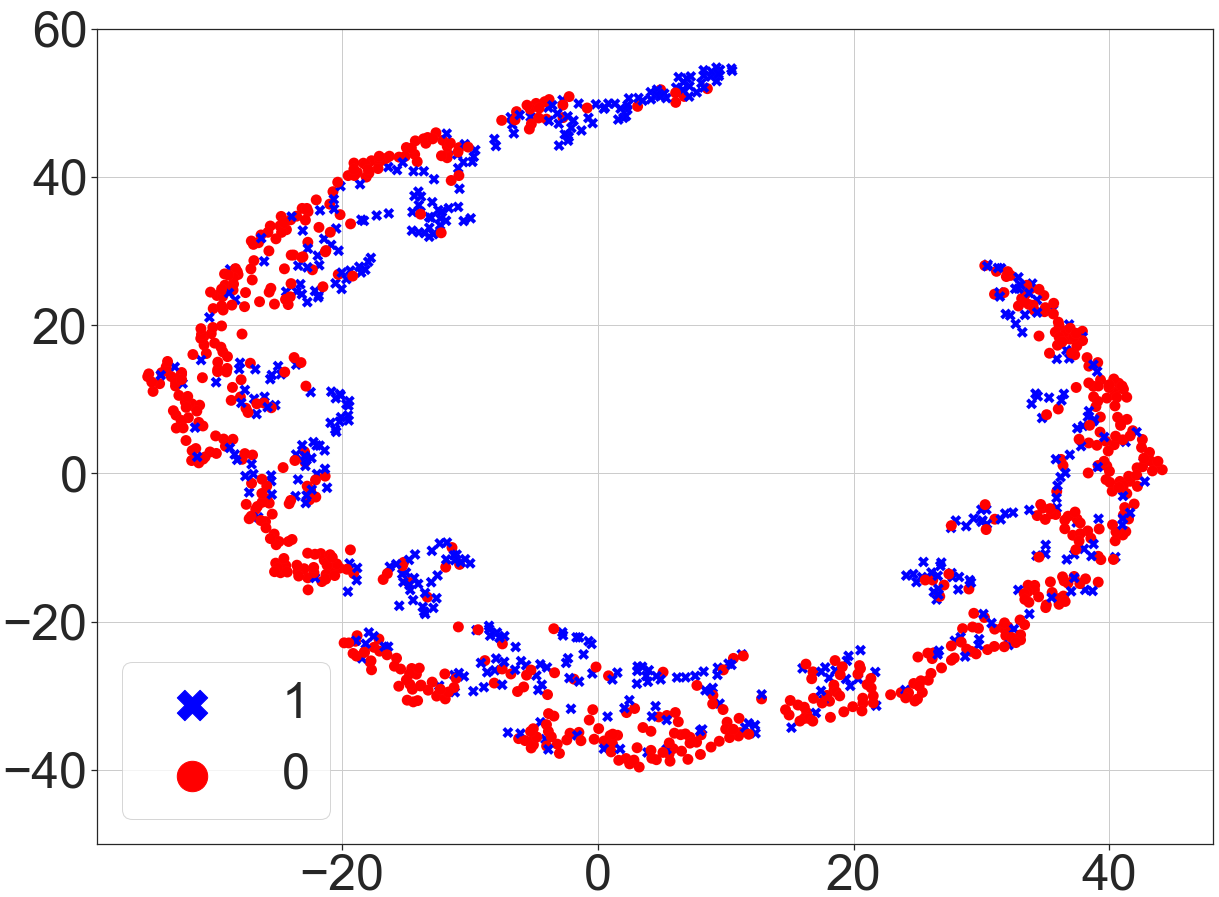}
  \caption{\textsc{NetLSD-hc}}
  \end{subfigure}%
  \\
  \begin{subfigure}{.33\textwidth}
  \centering
  \includegraphics[scale = 0.17] {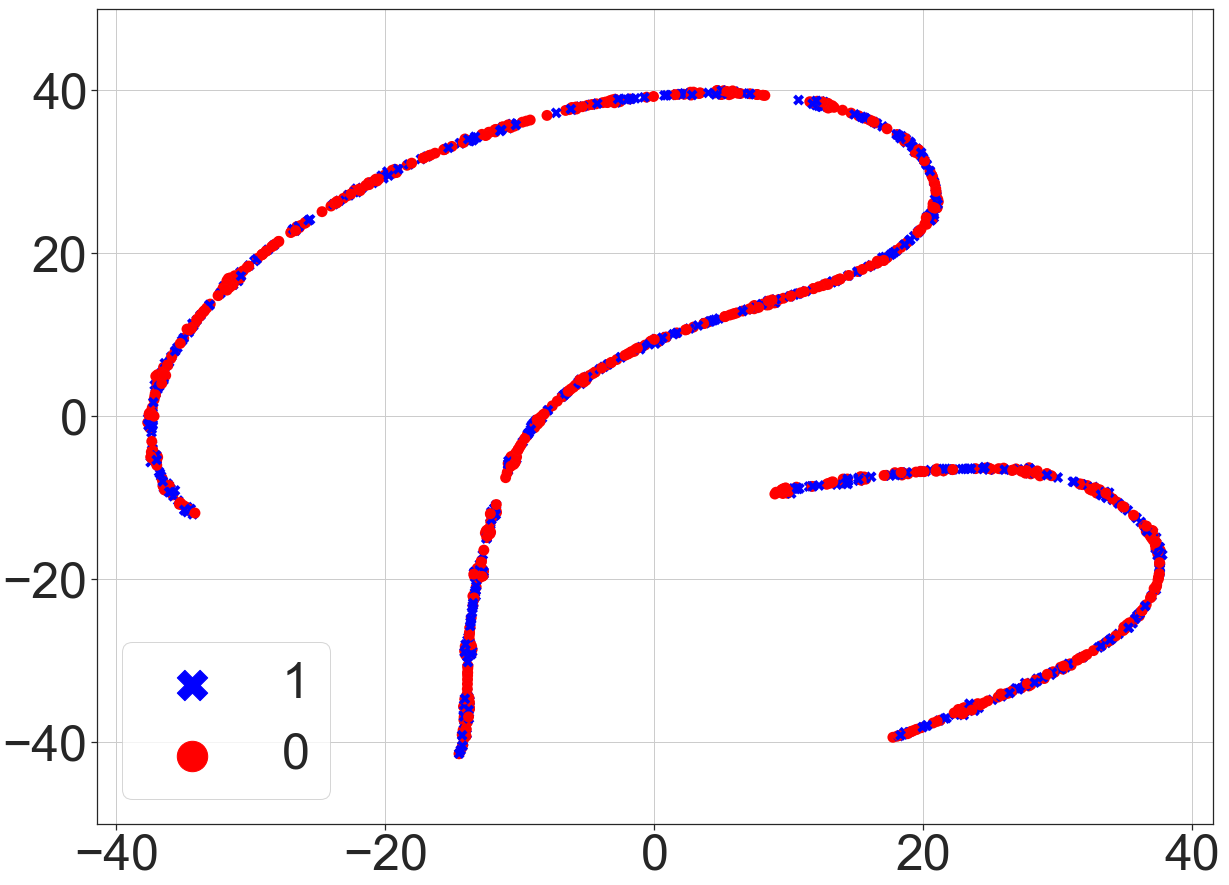}
  \caption{\textsc{gabe} 25\%}
  
  \end{subfigure}%
   \begin{subfigure}{0.33\textwidth}
  \centering
  \includegraphics[scale = 0.17] {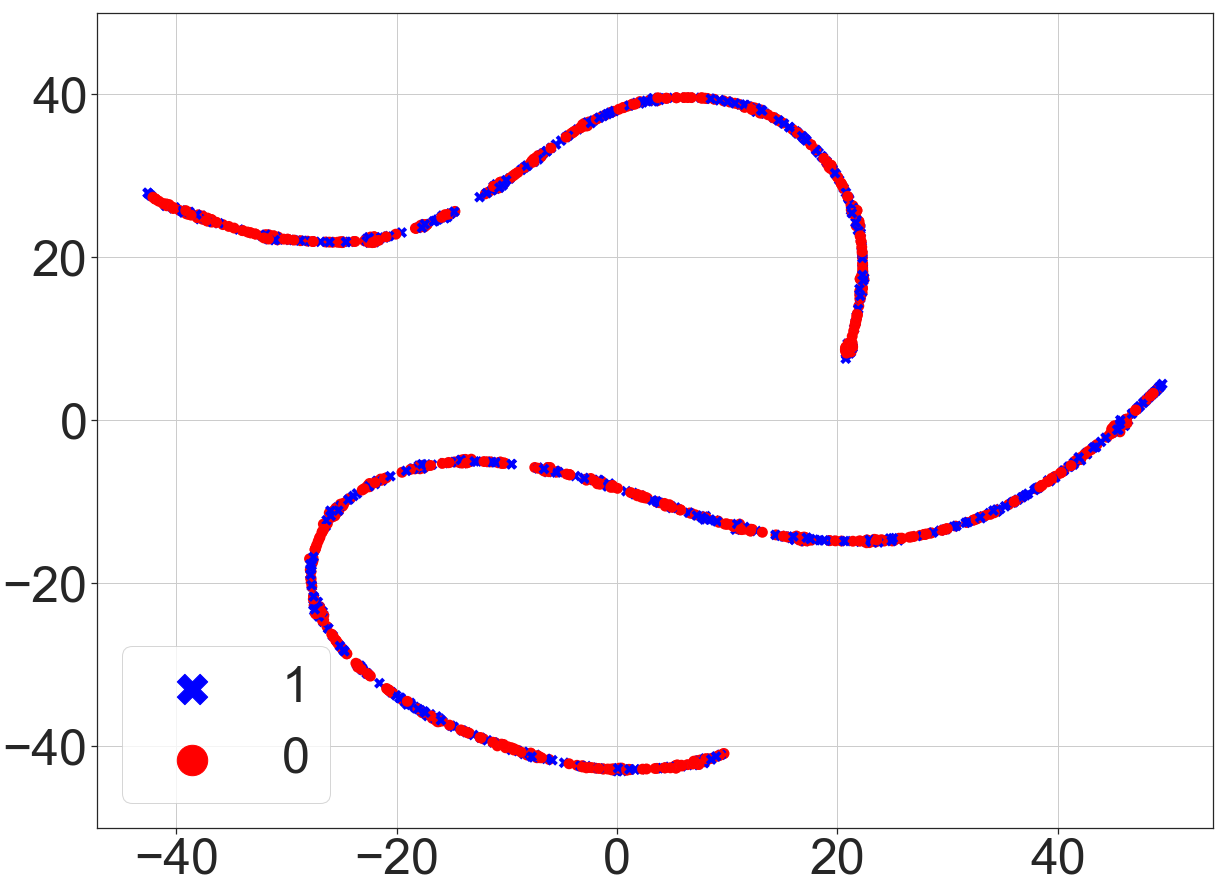}
  \caption{\textsc{gabe} 50\%}
  
  \end{subfigure}%
  \begin{subfigure}{0.33\textwidth}
  \centering
  \includegraphics[scale = 0.17] {tsne_csv/NETLSD-heatComplete0_DD_tsne_plot.png}
  \caption{\textsc{NetLSD-hc}}
  \end{subfigure}%
  \\
  \begin{subfigure}{.33\textwidth}
  \centering
  \includegraphics[scale = 0.17] {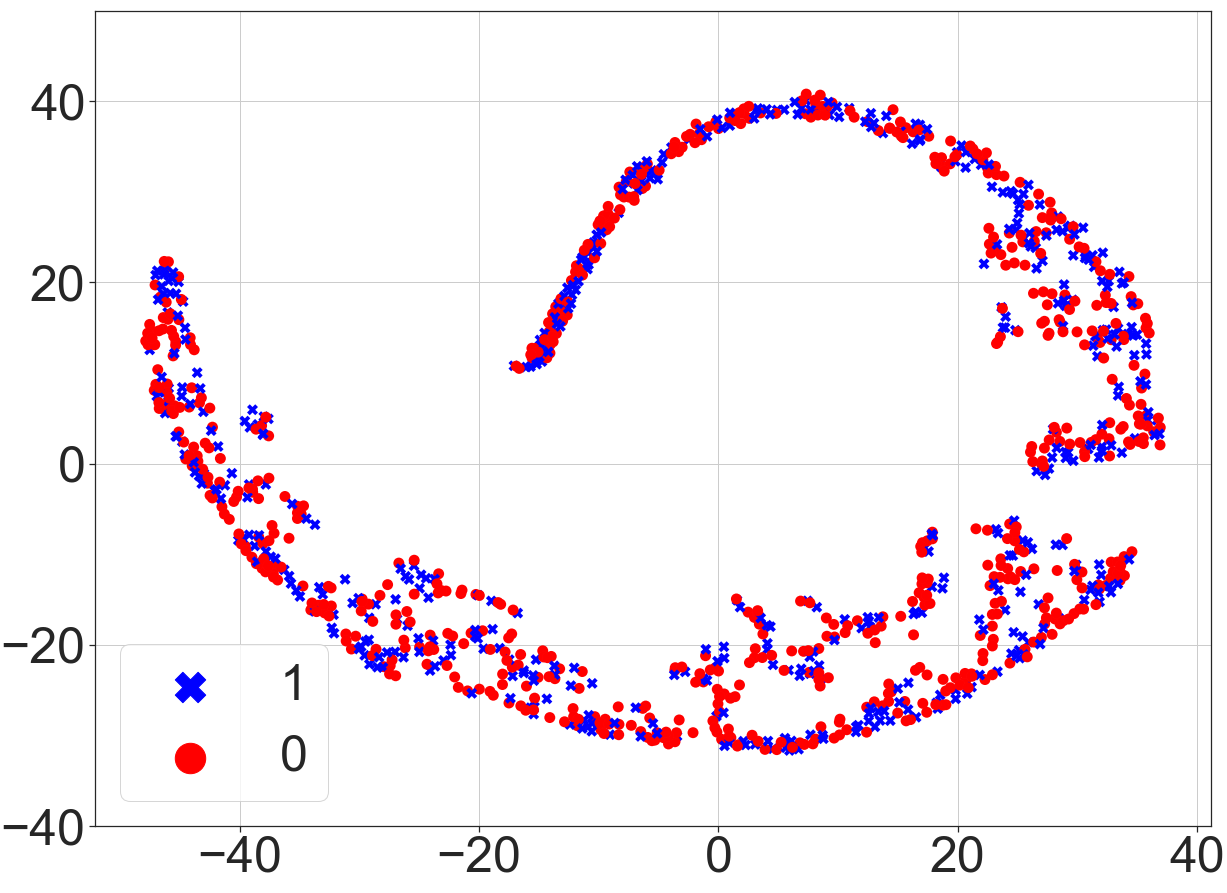}
  \caption{\textsc{santa-hc} 25\%}
  
  \end{subfigure}%
   \begin{subfigure}{0.33\textwidth}
  \centering
  \includegraphics[scale = 0.17] {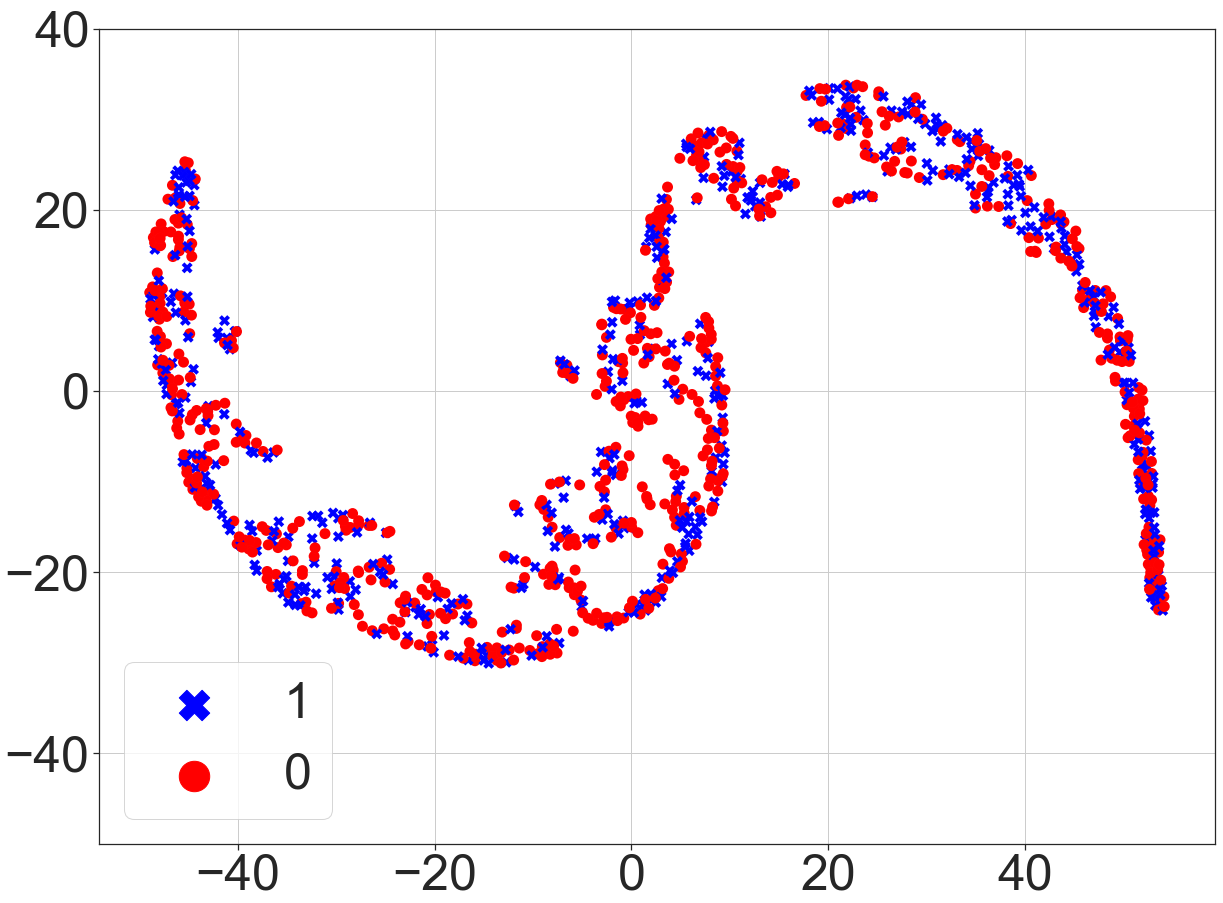}
  \caption{\textsc{santa-hc} 50\%}
  
  \end{subfigure}%
  \begin{subfigure}{0.33\textwidth}
  \centering
  \includegraphics[scale = 0.17] {tsne_csv/NETLSD-heatComplete0_DD_tsne_plot.png}
  \caption{\textsc{NetLSD-hc}}
  \end{subfigure}%
\caption{t-SNE plots for different descriptors and budgets on DD dataset. Legends show true classes. The figure is best seen in color.} 

 \label{fig_tsne_plots_dd}
\end{figure}

\section{Results and Discussion}\label{sec:experiments}

In this section, we report the results of our experiments and show the changes in approximation performance with varying values of the budget $b$. We also report the accuracies of classifiers learned on these descriptors. Furthermore, we demonstrate the scalability of the corresponding streaming algorithms. 
The distance between the exact and the approximate descriptor (output of the algorithms) is referred to as the approximation error of the algorithms.  
Note that in the figures ahead, \textsc{santa-xy} corresponds to the \textsc{santa} descriptor variant with kernel \textsc{x} (\textsc{h} or \textsc{w}) and normalization \textsc{y} (\textsc{n}, \textsc{e},  or \textsc{c}).

\subsection{Approximation Quality}

In this section, we test the approximation quality of our descriptors.
We uniformly sampled 1,000 graphs of size 10,000 to 50,000 from REDDIT, representing interactions in various ``sub-reddits''.

\subsubsection{Effect of number of Taylor terms for \textsc{SANTA}}

We first show in Figure~\ref{fig:plots4} how increasing the number of Taylor terms affects the approximation quality of \textsc{santa} with respect to $j$. For 1000 linearly spaced values of $j \in [0.001,1]$, the relative error
(defined as $\|x-\hat{x}|/x$, where $x$ is the real value and $\hat{x}$ is the approximation) 
across 1000 REDDIT graphs is averaged and plotted. Observe that increasing the number of Taylor terms allows us to better approximate values for larger $j$, enabling us to use a greater range of $j$. 

Note that there is no need to check this for each normalization since the normalization is canceled out when computing relative error.
Also, note that the values produced by four terms are ignored for the wave kernel since the values introduced in the fourth term are imaginary and are not used in the descriptor.

\begin{figure}[h!]
    \centering
    \subcaptionbox{Heat kernel}{%
    \resizebox {0.3\columnwidth} {!} {%
    \centering
    \begin{tikzpicture}
    \begin{axis}[
        xlabel={$j$},
        ylabel={Average Error},
        xtick={0,0.2,0.4,.6,.8,1},
        ytick={0,0.1,0.2,0.3,0.4,0.5},
        legend pos=north west,
        ymajorgrids=true,
        grid style=dashed,
        label style={font=\Large},
        mark size=0.000001pt,
    ]
    \addplot table [mark=none,x=x, y=t2, col sep=comma] {heat.tex}; \addlegendentry{Three Terms}
    \addplot table [mark=none,x=x, y=t3, col sep=comma] {heat.tex}; \addlegendentry{Four Terms}
    \addplot table [mark=none,x=x, y=t4, col sep=comma] {heat.tex}; \addlegendentry{Five Terms}
    \end{axis}
    \end{tikzpicture}
    }}%
    \subcaptionbox{Wave kernel}{%
    \resizebox {0.3\columnwidth} {!} {%
    \centering
    \begin{tikzpicture}
    \begin{axis}[
        xlabel={$j$},
        ylabel={Average Error},
        xtick={0,0.2,0.4,.6,.8,1},
        ytick={0,0.1,0.2,0.3,0.4,0.5},
        legend pos=north west,
        ymajorgrids=true,
        grid style=dashed,
        label style={font=\Large},
        mark size=0.001pt,
    ]
    \addplot table [x=x, y=t2, col sep=comma] {wave.tex}; 
    \addlegendentry{Three Terms}
    \addplot table [mark=circle,x=x, y=t4, col sep=comma] {wave.tex}; 
    \addlegendentry{Five Terms}
    \end{axis}
    \end{tikzpicture}
    }}%
    \caption{Average relative error for $j \in [0.001,1]$ of \textsc{santa} with varying number of Taylor terms.}
    \label{fig:plots4}
\end{figure}
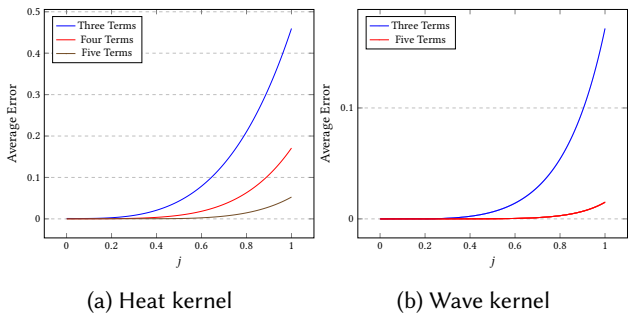

\subsubsection{Effect of increasing the budget for each descriptor}

Figure~\ref{fig:plots3} shows that the average approximation error across the sampled graphs decreases as the budget increases. Observe that normalized versions of \textsc{santa} can achieve very low errors even with small values of $b$. Unfortunately, un-normalized variants of \textsc{santa} have very large errors and would likely be unfruitful in practical settings.

\begin{figure}[h!]
    \centering
    \subcaptionbox{\textsc{gabe}}{%
    \resizebox {0.245\columnwidth} {!} {%
    \centering
    \begin{tikzpicture}
    \begin{axis}[
        xlabel={Budget [\% of $|E_G|$]},
        ylabel={Avg. Distance [\num{1e-1}]},
        xtick={0,10,20,30,40,50},
        ytick={0,8,16},
        ymin=0, ymax=16,
        legend pos=north east,
        ymajorgrids=true,
        grid style=dashed,
        label style={font=\Large}
    ]
    \addplot[
        color=blue,
        mark=square,
        ]
        coordinates {
        (5,15.15)(10,7.09)(15,4.07)(20,2.79)(25,2.01)(30,1.5)(35,1.15)(40,1.01)(45,0.77)(50,0.63)
        };
    \end{axis}
    \end{tikzpicture}
    }}%
    \subcaptionbox{\textsc{maeve}}{%
    \resizebox {0.245\columnwidth} {!} {%
    \centering
    \begin{tikzpicture}
    \begin{axis}[
        xlabel={Budget [\% of $|E_G|$]},
        ylabel={Avg. Distance [\num{1e-1}]},
        ymin=0, ymax=32, 
        xtick={0,10,20,30,40,50},
        ytick={0,16,32},
        legend pos=north east,
        ymajorgrids=true,
        grid style=dashed,
        label style={font=\Large},
            ]
    \addplot[
        color=blue,
        mark=square,
        ]
        coordinates {
        (5,31.73)(10,21.98)(15,16.6)(20,12.61)(25,9.92)(30,7.67)(35,5.94)(40,4.64)(45,3.58)(50,2.72)
        };
    \end{axis}
    \end{tikzpicture}
    }}%
    \subcaptionbox{\textsc{santa-hn}}{%
    \resizebox {0.245\columnwidth} {!} {%
    \centering
    \begin{tikzpicture}
    \begin{axis}[
        xlabel={Budget [\% of $|E_G|$]},
        ylabel={Avg. Distance [\num{1e1}]},
        ymin=0, ymax=3, 
        xtick={0,10,20,30,40,50},
        ytick={0,1,2,3},
        legend pos=north east,
        ymajorgrids=true,
        grid style=dashed,
        label style={font=\Large},
    ]
    \addplot[
        color=blue,
        mark=square,
        ]
        coordinates {
        (5,2)(10,2.1)(15,1.99)(20,1.94)(25,1.91)(30,1.88)(35,1.87)(40,1.86)(45,1.85)(50,1.84)
        };
    \end{axis}
    \end{tikzpicture}
    }}%
    \subcaptionbox{\textsc{santa-he}}{%
    \resizebox {0.245\columnwidth} {!} {%
    \centering
    \begin{tikzpicture}
    \begin{axis}[
        xlabel={Budget [\% of $|E_G|$]},
        ylabel={Avg. Distance [\num{1e-3}]},
        ymin=0, ymax=4, 
        xtick={0,10,20,30,40,50},
        ytick={0,2,4},
        legend pos=north east,
        ymajorgrids=true,
        grid style=dashed,
        label style={font=\Large},
    ]
    \addplot[
        color=blue,
        mark=square,
        ]
        coordinates {
        (5,2.91)(10,3.05)(15,2.89)(20,2.81)(25,2.76)(30,2.73)(35,2.71)(40,2.69)(45,2.68)(50,2.67)
        };
    \end{axis}
    \end{tikzpicture}
    }}%
    \\
    \subcaptionbox{\textsc{santa-hc}}{%
    \resizebox {0.245\columnwidth} {!} {%
    \centering
    \begin{tikzpicture}
    \begin{axis}[
        xlabel={Budget [\% of $|E_G|$]},
        ylabel={Avg. Distance [\num{1e-3}]},
        ymin=0, ymax=10,
        xtick={0,10,20,30,40,50},
        ytick={0,5,10},
        legend pos=north east,
        ymajorgrids=true,
        grid style=dashed,
        label style={font=\Large},
    ]
    \addplot[
        color=blue,
        mark=square,
        ]
        coordinates {
        (5,7.56)(10,7.94)(15,7.52)(20,7.31)(25,7.19)(30,7.1)(35,7.05)(40,7.01)(45,6.97)(50,6.95)
        };
    \end{axis}
    \end{tikzpicture}
    }}%
    \subcaptionbox{\textsc{santa-wn}}{%
    \resizebox {0.245\columnwidth} {!} {%
    \centering
    \begin{tikzpicture}
    \begin{axis}[
        xlabel={Budget [\% of $|E_G|$]},
        ylabel={Avg. Distance},
        ymin=0, ymax=10, 
        xtick={0,10,20,30,40,50},
        ytick={0,5,10},
        legend pos=north east,
        ymajorgrids=true,
        grid style=dashed,
        label style={font=\Large},
    ]
    \addplot[
        color=blue,
        mark=square,
        ]
        coordinates {
        (5,7.98)(10,8.98)(15,7.91)(20,7.38)(25,7.05)(30,6.84)(35,6.7)(40,6.59)(45,6.5)(50,6.44)
        };
    \end{axis}
    \end{tikzpicture}
    }}%
    \subcaptionbox{\textsc{santa-we}}{%
    \resizebox {0.245\columnwidth} {!} {%
    \centering
    \begin{tikzpicture}
    \begin{axis}[
        xlabel={Budget [\% of $|E_G|$]},
        ylabel={Avg. Distance [\num{1e-3}]},
        ymin=0, ymax=2,
        xtick={0,10,20,30,40,50},
        ytick={0,1,2},
        legend pos=north east,
        ymajorgrids=true,
        grid style=dashed,
        label style={font=\Large},
    ]
    \addplot[
        color=blue,
        mark=square,
        ]
        coordinates {
        (5,1.14)(10,1.29)(15,1.12)(20,1.04)(25,1)(30,0.96)(35,0.94)(40,0.93)(45,0.91)(50,0.9)
        };
    \end{axis}
    \end{tikzpicture}
    }}%
    \subcaptionbox{\textsc{santa-wc}}{%
    \resizebox {0.245\columnwidth} {!} {%
    \centering
    \begin{tikzpicture}
    \begin{axis}[
        xlabel={Budget [\% of $|E_G|$]},
        ylabel={Avg. Distance [\num{1e-3}]},
        ymin=0, ymax=3, 
        xtick={0,10,20,30,40,50},
        ytick={0,1,2,3},
        legend pos=north east,
        ymajorgrids=true,
        grid style=dashed,
        label style={font=\Large},
    ]
    \addplot[
        color=blue,
        mark=square,
        ]
        coordinates {
        (5,2.02)(10,2.27)(15,1.99)(20,1.85)(25,1.77)(30,1.71)(35,1.67)(40,1.64)(45,1.62)(50,1.61)
        };
    \end{axis}
    \end{tikzpicture}
    }}%
    \caption{Approximation errors with increasing budget $(b)$ for \textsc{gabe}, \textsc{maeve}, and all variants of \textsc{santa}.}
    \label{fig:plots3}
\end{figure}
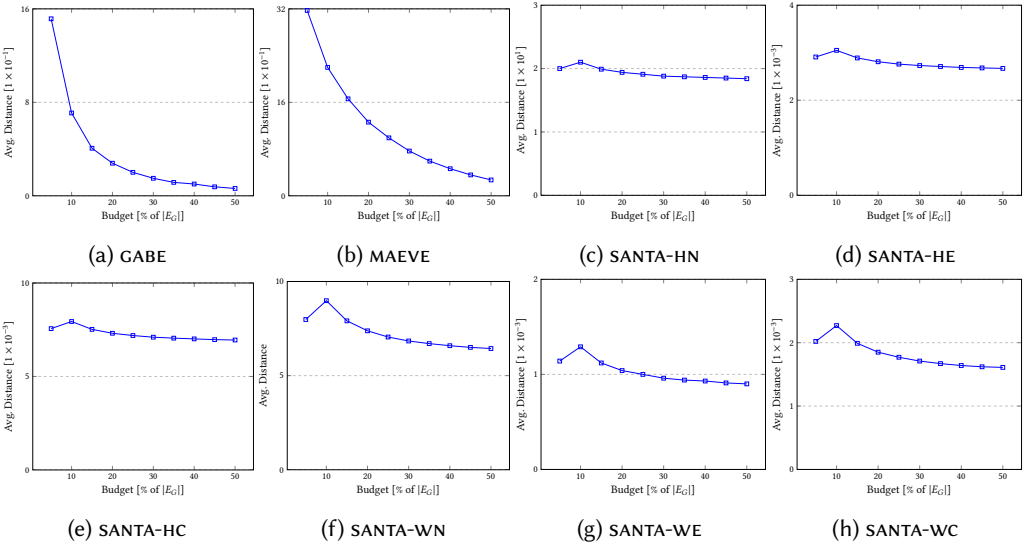

\subsection{Graph Classification}

We opted for a Nearest Neighbor classifier as in Tsitulin et al.~\cite{tsitsulin2018netlsd} work on \textsc{NetLSD}. 10-fold cross-validation was performed for ten random splits of the dataset. The average accuracy for each fold is reported. Note that only two folds are used for FMM because each class has a small number of samples. The descriptors are computed for our models by using  25\% and 50\% of the number of edges of each graph.

\subsubsection{Results on Different Variants of SANTA}

In Table~\ref{tab:ressanta}, we compare all variants of \textsc{santa} to find out which one works best. It is clear that \textsc{santa-hc} often provides the best results. For this reason, and because it has the lowest error across all variants in Figure~\ref{fig:plots3}, we recommend \textsc{santa-hc} for practical usage and compare it to other descriptors in the coming section.

In this same table, we show the results provided by \textsc{NetLSD} when using the same values for $j$. Despite the error added by the Taylor approximation and budgeted sampling, \textsc{santa} provides results comparable to \textsc{NetLSD}.

We observe better results for the datasets OHSU and FMM when the budget is smaller, sometimes more significant than those provided by \textsc{NetLSD}. 
We believe that due to the small size of these datasets, the noise added when approximating the embeddings is not eliminated by the classifier. Thus, we do not recommend using \textsc{santa} on smaller datasets without a larger budget; otherwise, the classifier may not be able to generalize.

Table~\ref{tab:res} compares the classification accuracy of our proposed models and the benchmark descriptors.
Despite using only a fraction of edges, our proposed descriptors provide results competitive to descriptors that have access to the entire graph in seven of the eight classification datasets. Unfortunately, \textsc{santa} is unable to compete with its competitors in most cases, despite giving results near to \textsc{NetLSD} when used on the same values of $j$ (see Table~\ref{tab:ressanta}).

\subsubsection{Comparing SANTA to SLaQ}

We report the comparison of the accuracy achieved by \textsc{santa} (across all variants) with that of SLaQ, a method introduced by Titsulin et al.~\cite{tsitsulin2020just} to approximate NetLSD, in Table~\ref{tab:santavsslaq}. For DD and CLB datasets, we observe that \textsc{santa} outperforms SLaQ. For RDT5 and RDT12 datasets, although SLaQ outperforms SANTA, the predictive accuracy of SLaQ is not significantly higher compared to \textsc{santa}, despite SLaQ keeping the entire graph in memory.

\begin{table}[h!]
\begin{tabular}{@{}cccccc@{}}
\toprule
\textbf{Method}                                  & \textbf{Budget}        & \textbf{DD} & \textbf{CLB} & \textbf{RDT5} & \textbf{RDT12} \\ 
\midrule
SLaQ & $|E_G|$ & 66.77 & 58.76 & \textbf{35.48} & \textbf{25.31} \\ 
\midrule
\multirow{2}{*}{\textsc{santa}} & $\nicefrac{1}{4}|E_G|$ & \textbf{68.16}       & 63.80        & \textbf{35.32}         & \textbf{24.68}          \\ 
& $\nicefrac{1}{2}|E_G|$ & 66.83       & \textbf{64.90}        & \textbf{34.62}         & 23.89          \\ 
\bottomrule
\end{tabular}
\caption{Comparing reported classification accuracy of \textsc{santa} and SLaQ. Results within 1\% of the best have been bold-faced.}
\label{tab:santavsslaq}
\end{table}

\subsubsection{Performance on Large Classification Tasks}

To showcase the practical usage of our proposed methods, we performed graph classification on the Malnet-TB dataset on a computer with relatively weaker hardware. In this case, we used ten workers and $\nicefrac{1}{10}|E_G|$ as our budget. The time taken and classification accuracy is provided in Table~\ref{fig:malnet}.
Note that all of our proposed models can process $ 260$K graphs with up to $550$K vertices and $1.6$M edges in $\approx 1 \nicefrac{1}{2}$ days.

\begin{table}[h!]
\centering
\resizebox{0.98\columnwidth}{!}{%
\begin{tabular}{@{}ccccccccccc@{}}
\toprule
\textbf{Variant} &  \textbf{Method} &  \textbf{Budget} &  \textbf{DD} &  \textbf{CLB} &  \textbf{RDT2} &  \textbf{RDT5} &  \textbf{RDT12} &  \textbf{OHSU} &  \textbf{GHUB} &  \textbf{FMM} \\ 
\midrule
\multirow{3}{*}{\textsc{hn}} &  \multirow{2}{*}{\textsc{santa}} &
  $\nicefrac{1}{4}|E_G|$ &  66.22 &  61.90 &  76.02 &  \textbf{35.12} &  22.38 &  54.50 &  54.88 &  26.80 \\ [.04in]
 &   &  $\nicefrac{1}{2}|E_G|$ &  66.03 &  62.59 &  75.88 &  \textbf{34.39} &  22.21 &  54.50 &  54.78 &  26.80 \\ [.04in]
 &  \textsc{NetLSD}\textsuperscript{$\star$} &
  $|E_G|$ &  66.44 &  63.29 &  75.82 &  33.50 &  21.74 &  56.98 &  55.75 &  27.14 \\ [.04in]
  \cmidrule(lr){2-11}
\multirow{3}{*}{\textsc{he}} &  \multirow{2}{*}{\textsc{santa}} &
  $\nicefrac{1}{4}|E_G|$ &  63.98 &  63.80 &  63.77 &  \textbf{34.90} &  21.42 &  \textbf{69.96} &  54.56 &  \textbf{39.70} \\ [.04in]
 &   &  $\nicefrac{1}{2}|E_G|$ &  65.76 &  \textbf{64.90} &  64.33 &  34.22 &  21.91 &  66.82 &  \textbf{55.11} &  20.00 \\ [.04in]
 &  \textsc{NetLSD}\textsuperscript{$\star$} &
  $|E_G|$ &  60.75 &  64.08 &  61.98 &  29.44 &  19.47 &  52.66 &  57.19 &  21.49 \\ [.04in]
   \cmidrule(lr){2-11}
\multirow{3}{*}{\textsc{hc}} &  \multirow{2}{*}{\textsc{santa}} &  $\nicefrac{1}{4}|E_G|$ &  \textbf{68.16} &  63.44 &  \textbf{79.14} &  \textbf{35.32} &  \textbf{24.68} &  67.98 &  \textbf{55.99} &  \textbf{38.76} \\ [.04in]
 &   &  $\nicefrac{1}{2}|E_G|$ &  66.83 &  63.50 &  \textbf{78.34} &  \textbf{34.62} &  \textbf{23.89} &  58.25 &  \textbf{55.61} &  23.74 \\ [.04in]
 &  \textsc{NetLSD}\textsuperscript{$\star$} &  $|E_G|$ &  65.99 &  64.77 &  75.96 &  37.02 &  25.12 &  55.95 &  55.03 &  35.39 \\ [.04in]
  \cmidrule(lr){2-11}
\multirow{3}{*}{\textsc{wn}} &  \multirow{2}{*}{\textsc{santa}} &  $\nicefrac{1}{4}|E_G|$ &  66.70 &  62.49 &  75.68 &  \textbf{35.08} &  22.76 &  55.30 &  \textbf{55.32} &  26.80 \\ [.04in]
 &   &  $\nicefrac{1}{2}|E_G|$ &  66.63 &  63.15 &  75.57 &  \textbf{34.53} &  22.81 &  55.30 &  \textbf{55.30} &  26.80 \\ [.04in]
 &  \textsc{NetLSD}\textsuperscript{$\star$} &  $|E_G|$ &  66.19 &  63.01 &  75.64 &  33.40 &  22.23 &  54.14 &  58.08 &  28.60 \\ [.04in]
  \cmidrule(lr){2-11}
\multirow{3}{*}{\textsc{we}} &  \multirow{2}{*}{\textsc{santa}} &
  $\nicefrac{1}{4}|E_G|$ &  61.55 &  62.52 &  65.10 &  34.09 &  21.66 &  67.59 &  \textbf{55.04} &  24.37 \\ [.04in]
 &   &  $\nicefrac{1}{2}|E_G|$ &  61.02 &  62.04 &  64.90 &  33.56 &  21.27 &  64.32 &  54.06 &  11.27 \\ [.04in]
 &  \textsc{NetLSD}\textsuperscript{$\star$} &
  $|E_G|$ &  59.35 &  64.46 &  62.14 &  26.99 &  19.05 &  60.61 &  58.20 &  15.08 \\ [.04in]
   \cmidrule(lr){2-11}
\multirow{3}{*}{\textsc{wc}} &  \multirow{2}{*}{\textsc{santa}} &
  $\nicefrac{1}{4}|E_G|$ &  64.15 &  61.25 &  74.26 &  31.45 &  21.43 &  58.48 &  \textbf{55.12} &  24.38 \\ [.04in]
 &   &  $\nicefrac{1}{2}|E_G|$ &  61.81 &  62.47 &  74.64 &  31.79 &  21.46 &  58.12 &  54.67 &  11.74 \\ [.04in]
 &  \textsc{NetLSD}\textsuperscript{$\star$} &  $|E_G|$ &  64.81 &  62.97 &  75.10 &  29.39 &  21.56 &  47.93 &  56.60 & 19.01 \\ [.04in] 
  \bottomrule
\end{tabular}%
}
\caption{Classification accuracy (in \%) using nearest neighbor classifier across all datasets for all variants of \textsc{santa}, as well as \textsc{NetLSD} modified to use the same values for $j$. Results within 1\% of the best across all \textsc{santa} variants have been bold-faced.}
\label{tab:ressanta}

\resizebox{0.98\columnwidth}{!}{%
\begin{tabular}{p{1.8cm}cccccccccc}
\toprule
 \textbf{Approach} & \textbf{Method} &  \textbf{Budget} &  \textbf{DD} &  \textbf{CLB} &  \textbf{RDT2} &  \textbf{RDT5} &  \textbf{RDT12} &  \textbf{OHSU} &  \textbf{GHUB} &  \textbf{FMM} \\ 
\midrule
 \multirow{3}{*}{Benchmark} & \textsc{NetLSD} &  $|E_G|$ &  \textbf{70.36} &   \textbf{74.27} &   82.85 &   41.23 &   30.90 &   \textbf{73.79} &   55.73 &   27.14 \\ [.04in]
 & \textsc{feather} &  $|E_G|$ &  63.57 &   73.14 &   83.22 &   \textbf{43.09} &   \textbf{34.33} &   62.77 &   60.95 &   26.81 \\ [.04in]
 & \textsc{sf} &  $|E_G|$ &  62.84 &   72.82 &   82.38 &   \textbf{42.36} &   30.80 &   59.50 &   57.01 &   29.00 \\  [.04in]
\midrule
 \multirow{8}{1.8cm}{Proposed Descriptors} & 
 \multirow{2}{*}{\textsc{maeve}} &  $\nicefrac{1}{4}|E_G|$ &  59.44 &   68.42 &   85.04 &   41.15 &   32.57 &   49.07 &   \textbf{61.99} &   12.90 \\ [.04in]
 &  &  $\nicefrac{1}{2}|E_G|$ &  61.26 &   70.95 &   \textbf{86.15} &   41.53 &   \textbf{33.69} &   47.12 &   \textbf{61.81} &   14.63 \\ [.04in]
  \cmidrule(lr){2-11}
 & \multirow{2}{*}{\textsc{gabe}} &  $\nicefrac{1}{4}|E_G|$ &  65.23 &   63.62 &   84.65 &   41.10 &   32.18 &   44.30 &   \textbf{61.88} &   27.37 \\ [.04in]
 &  &  $\nicefrac{1}{2}|E_G|$ &  69.08 &   65.23 &   \textbf{85.35} &   40.63 &   32.96 &   41.02 &   \textbf{62.72} &   25.35 \\ [.04in]
 \cmidrule(lr){2-11}
 & \multirow{2}{*}{\textsc{santa-hc}} &  $\nicefrac{1}{4}|E_G|$ &  68.16 &   63.44 &   79.14 &   35.32 &   24.68 &   67.98 &   55.99 &   \textbf{38.76} \\ [.04in]
 &  &  $\nicefrac{1}{2}|E_G|$ &  66.83 &   63.50 &   78.34 &   34.62 &   23.89 &   58.25 &   55.61 &   23.74 \\ [.04in]

\bottomrule
\end{tabular}%
}
\caption{Accuracy of the nearest neighbor classifier on different datasets, descriptors, and benchmark methods. Results within 1\% of the best have been bold-faced.}
\label{tab:res}
\end{table}

\begin{table}[h!]
\resizebox{0.98\columnwidth}{!}{%
    \begin{tabular}{cccP{1.2cm}P{1.1cm}P{1.1cm}P{1.1cm}P{1.1cm}P{1.1cm}}
    \toprule
    
    & \multirow{2}{*}{\textbf{GABE}} &  \multirow{2}{*}{\textbf{MAEVE}} &  \textbf{SANTA HN} &  \textbf{SANTA HE} &  \textbf{SANTA HC} &  \textbf{SANTA WN} &  \textbf{SANTA WE} &  \textbf{SANTA WC} \\
    \toprule
   Accuracy & 79.52 & 79.93  & 74.60 & 69.38 & 69.47 & 74.89 & 69.39 & 69.40  \\ \midrule
   Avg. Time [s] &  0.54 & 0.45 &  0.36 & 0.36 & 0.36 & 0.36 & 0.36 & 0.36  \\
   Max Time [min] &  66.73 & 37.93 & 33.82 & 33.82 & 33.82 & 33.82 & 33.82 & 33.82  \\
   Total Time [hr] & 38.59  & 32.41 & 25.63 & 25.63 & 25.63 & 25.63 & 25.63 & 25.63  \\
   \bottomrule
    \end{tabular}
    }
\caption{Results on Malnet-TB for \textsc{gabe}, \textsc{maeve}, and all variants of \textsc{santa} with $b = \nicefrac{1}{10}|E_G|$. We report the accuracy, the average and maximum amount of time taken for each graph in the dataset, and the total amount of time taken.}
    \label{fig:malnet}
\end{table}

\begin{table}[h!]
    \centering
    \resizebox{0.98\columnwidth}{!}{%
    \begin{tabular}{ccccP{1.2cm}P{1.1cm}P{1.1cm}P{1.1cm}P{1.1cm}P{1.1cm}}
    \toprule
    
    & & \multirow{2}{*}{\textbf{GABE}} &  \multirow{2}{*}{\textbf{MAEVE}} &  \textbf{SANTA HN} &  \textbf{SANTA HE} &  \textbf{SANTA HC} &  \textbf{SANTA WN} &  \textbf{SANTA WE} &  \textbf{SANTA WC} \\
    \toprule
   \multirow{2}{*}{PT} & Time [min] & 0.52  &  0.77 & 1.11 & 1.11 & 1.11 & 1.11 & 1.11 & 1.11 \\
   & Distance & 3.36 & 5.11 & 2.38 & 6.31 & 1.35 & 4.36 & 1.72 & 1.14 \\
   \cmidrule(lr){3-10}
   \multirow{2}{*}{FL} & Time [min] & 5.48  & 3.48 & 4.96 & 4.96 & 4.96 & 4.96 & 4.96 & 4.96 \\
   & Distance & 2.77 & 5.09 & 1.14 & 4.95 & 1.02 & 2.43 & 1.59 & 1.04 \\
   \cmidrule(lr){3-10}
   \multirow{2}{*}{US} & Time [min] & 0.63  & 1.21 & 1.99 & 1.99 & 1.99 & 1.99 & 1.99 & 1.99 \\
   & Distance & 5.24 & 11.39 & 18.30 & 7.64 & 1.92 & 6.32 & 0.37 & 0.28 \\
   \cmidrule(lr){3-10}
   \multirow{2}{*}{U2} & Time [min] & 10.58  & 9.05 & 20.61 & 20.61 & 20.61 & 20.61 & 20.61 & 20.61 \\
   & Distance & 6.48 & 9.61 & - & - & - & - & - & - \\
   \cmidrule(lr){3-10}
   \multirow{2}{*}{FO} & Time [min] & 0.05 & 0.16 & 0.08 & 0.08 & 0.08 & 0.08 & 0.08 & 0.08 \\
   & Distance & 2.07 & 6.67 & 0.75 & 6.96 & 1.76 & 0.21 & 0.27 & 0.21 \\
   \cmidrule(lr){3-10}
   \multirow{2}{*}{CS} & Time [min] & 0.21 & 0.19 & 0.20 & 0.20 & 0.20 & 0.20 & 0.20 & 0.20 \\
   & Distance & 1.07 & 3.09 & 0.19 & 4.97 & 1.03 & 0.39 & 1.52 & 1.01 \\
   \cmidrule(lr){3-10}
   \multirow{2}{*}{SF} & Time [min] & 8.35 & 4.39 & 3.92 & 3.92 & 3.92 & 3.92 & 3.92 & 3.92 \\
   & Distance & 1.10 & 3.55 & 0.20 & 6.98 & 1.50 & 0.35 & 1.86 & 1.23 \\
        \bottomrule
    \end{tabular}
    }
\caption{Approximation error and time taken for \textsc{gabe}, \textsc{maeve}, and all variants of \textsc{santa} with $b = 100000$. Accuracy results for U2 have been omitted since the graph was too large to obtain true values.}

    \label{fig:plots1}
\resizebox{0.98\columnwidth}{!}{%
    \begin{tabular}{ccccP{1.2cm}P{1.1cm}P{1.1cm}P{1.1cm}P{1.1cm}P{1.1cm}}
    \toprule
    
    & & \multirow{2}{*}{\textbf{GABE}} &  \multirow{2}{*}{\textbf{MAEVE}} &  \textbf{SANTA HN} &  \textbf{SANTA HE} &  \textbf{SANTA HC} &  \textbf{SANTA WN} &  \textbf{SANTA WE} &  \textbf{SANTA WC} \\
    \toprule
   \multirow{2}{*}{PT} & Time [min] &  0.84 & 1.13 & 1.38 & 1.38 & 1.38 & 1.38 & 1.38 & 1.38 \\
   & Distance & 2.65 & 3.14 & 2.38 & 6.32 & 1.35 & 4.36 & 1.72 & 1.14 \\
   \cmidrule(lr){3-10}
   \multirow{2}{*}{FL} & Time [min] & 101.37 & 12.62 & 45.27 & 45.27 & 45.27 & 45.27 & 45.27 & 45.27 \\
   & Distance & 2.66 & 3.95 & 1.17 & 5.08 & 1.05 & 2.40 & 1.56 & 1.03 \\
   \cmidrule(lr){3-10}
   \multirow{2}{*}{US} & Time [min] & 0.90 & 1.34 & 1.94 & 1.94 & 1.94 & 1.94 & 1.94 & 1.94 \\
   & Distance & 4.84 & 10.08 & 18.35 & 7.66 & 1.92 & 6.30 & 0.36 & 0.28 \\
   \cmidrule(lr){3-10}
   \multirow{2}{*}{U2} & Time [min] & 17.23 & 19.18 & 26.88 & 26.88 & 26.88 & 26.88 & 26.88 & 26.88 \\
   & Distance & 3.56 & 7.73 & - & - & - & - & - & - \\
   \cmidrule(lr){3-10}
   \multirow{2}{*}{FO} & Time [min] & 0.12 & 0.17 & 0.14 & 0.14 & 0.14 & 0.14 & 0.14 & 0.14 \\
   & Distance & 1.16 & 2.52 & 0.74 & 6.96 & 1.76 & 0.21 & 0.27 & 0.21 \\
   \cmidrule(lr){3-10}
   \multirow{2}{*}{CS} & Time [min] & 1.88 & 0.71 & 0.83 & 0.83 & 0.83 & 0.83 & 0.83 & 0.83 \\
   & Distance & 1.01 & 0.86 & 0.19 & 4.97 & 1.03 & 0.39 & 1.52 & 1.00 \\
   \cmidrule(lr){3-10}
   \multirow{2}{*}{SF} & Time [min] & 174.54 & 29.10 & 54.37 & 54.37 & 54.37 & 54.37 & 54.37 & 54.37 \\
   & Distance & 1.04 & 1.58 & 0.20 & 6.99 & 1.50 & 0.35 & 1.85 & 1.23 \\
        \bottomrule
    \end{tabular}
    }
\caption{Approximation error and time taken for \textsc{gabe}, \textsc{maeve}, and all variants of \textsc{santa} with $b = 500000$. Accuracy results for U2 have been omitted since the graph was too large to obtain true values.}
    \label{fig:plots2}
\end{table}

\subsection{Scaling to Large Real-world Networks} 

In this section, we show the scalability of our proposed descriptors by running them on large real-world networks. For this purpose, we ran our algorithms on the networks listed in Table~\ref{tab:massive}. For each graph, descriptors were estimated for $b \in \{100000,500000\}$. In Table~\ref{fig:plots1} and ~\ref{fig:plots2}, we show the wall-clock time taken and the distance between the real and approximate vectors. Note that the lower values are better.

Note that to compute the real embeddings for \textsc{santa}, one would have to compute the eigenspectrum of each graph. Due to the intractability of this method, we approximate the true embeddings by approximating the eigenvalues using the largest and smallest eigenvalues of the Laplacian of each graph, as proposed in~\cite{tsitsulin2018netlsd}. As per the authors' suggestion, we attempted to obtain 150 eigenvalues from each end of the spectrum. While this was not possible for all graphs, a minimum of 50 eigenvalues were used for each end, i.e., at least 100 eigenvalues were used to compute the \textsc{NetLSD} embeddings for each graph. Note that this was not possible for the UK 2002 graph due to its large size. Observe that we can process graphs with millions of edges with reasonably low approximation error. UK 2002, a graph with $\approx 260M$ edges, was processed under half an hour by all of our proposed models. We note that when $b = 500000$, \textsc{gabe} and \textsc{santa} take a significant amount of time to compute on the Stanford and Flickr graphs due to their dense nature. Thus, we posit that one must consider the graph's density when setting the value of $b$.

\section{Conclusion}\label{sec:conclusion}

This paper proposes three graph descriptors and streaming algorithms with constant space complexity to construct them. Our descriptors extend the state-of-the-art graph descriptors and approximate their embeddings over graph streams. Experiments show that while using very less memory, our descriptors provide results comparable to SOTA descriptors, which store the entire graph in memory. We demonstrate the scalability of our algorithms to graphs with millions of edges (which is not possible for existing methods). We hope to introduce descriptors for attributed graphs that meet our constraints in the future. Another interesting future direction is to explore neural networks that can process edge streams, combining the scalability of stream-based methods and the classification prowess of graph convolutional networks.

\bibliographystyle{ACM-Reference-Format}
\bibliography{main}

\end{document}